\newtheorem{lem}{Lemma}
\theoremstyle{definition} \newtheorem{defn}{Definition}
\theoremstyle{definition} \newtheorem{assumption}{Assumption}
\theoremstyle{definition} \newtheorem{exm}{Example}
\newcommand{\indepe}{\mathop{\perp\!\!\!\perp}}
\newcommand{\notindepe}{\mathop{\perp\!\!\!\!\!\!/\!\!\!\!\!\!\perp}}
\newcommand\reallywidehat[1]{%
\savestack{\tmpbox}{\stretchto{%
  \scaleto{%
    \scalerel*[\widthof{\ensuremath{#1}}]{\kern-.6pt\bigwedge\kern-.6pt}%
    {\rule[-\textheight/2]{1ex}{\textheight}}
  }{\textheight}%
}{0.5ex}}%
\stackon[1pt]{#1}{\tmpbox}%
}
\def\rightarrowCirc{\hbox{$\circ$}\kern-1.5pt\hbox{$\rightarrow$}}
\def\circHyphenCirc{\hbox{$\circ$}\kern-1.5pt\hbox{$-$}\kern-1.5pt\hbox{$\circ$}}
\def\circHyphen{\hbox{$\circ$}\kern-1.5pt\hbox{$-$}}
\newcommand{\argmax}{\mathop{\rm argmax}\limits}
\newcommand*\bigcdot{\mathpalette\bigcdot@{.5}}
\newcommand*\bigcdot@[2]{\mathbin{\vcenter{\hbox{\scalebox{#2}{$\m@th#1\bullet$}}}}}
\title{Discovery of Causal Additive Models in the Presence of Unobserved Variables}
\author[1]{Takashi Nicholas Maeda} 
\author[1,2]{Shohei Shimizu}
\affil[1]{%
    RIKEN Center for Advanced Intelligence Project
}
\affil[2]{%
    Shiga University
}
\begin{document}
\maketitle

\begin{abstract}
Causal discovery from data affected by unobserved variables is an important but difficult problem to solve. The effects that unobserved variables have on the relationships between observed variables are more complex in nonlinear cases than in linear cases. In this study, we focus on causal additive models in the presence of unobserved variables. Causal additive models exhibit structural equations that are additive in the variables and error terms. We take into account the presence of not only unobserved common causes but also unobserved intermediate variables. Our theoretical results show that, when the causal relationships are nonlinear and there are unobserved variables, it is not possible to identify all the causal relationships between observed variables through regression and independence tests. However, our theoretical results also show that it is possible to avoid incorrect inferences. We propose a method to identify all the causal relationships that are theoretically possible to identify without being biased by unobserved variables. The empirical results using artificial data and simulated functional magnetic resonance imaging (fMRI) data show that our method effectively infers causal structures in the presence of unobserved variables.
\end{abstract}

\section{Introduction}
\label{sec:intro}

\begin{figure*}[t]
\centering
\includegraphics[width=16.0cm]{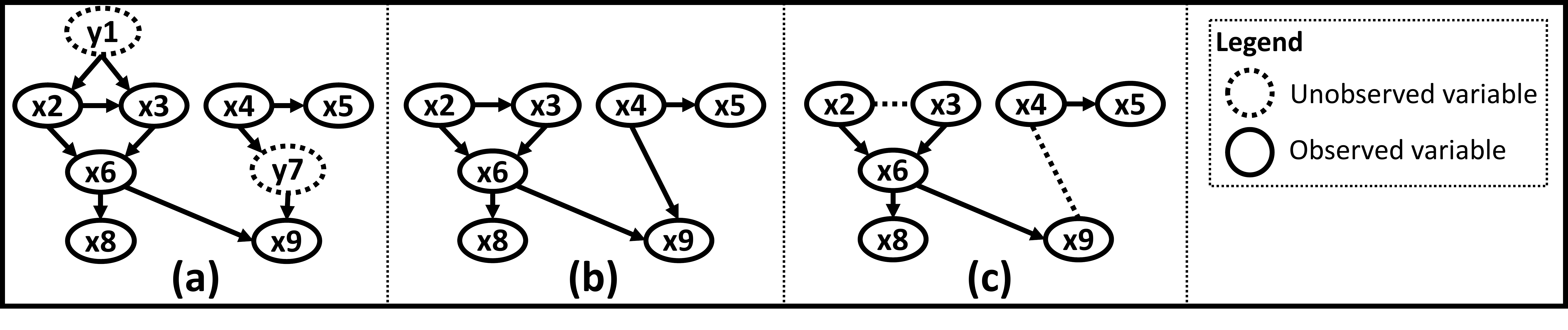}
\caption{(a) Data generation process. (b) True causal graph of observed variables. (c) Causal graph inferred by our method: Each dashed edge indicates that the causal relationship cannot be determined by our proposed method.}
\label{figure:intro}
\end{figure*}

A fundamental objective in various fields of science is to identify causal relationships. While randomized control trials are the most effective means of understanding causal relationships, such an approach is often too costly, unethical, or technically impossible to conduct. Thus, causal discovery from purely observational data is very important for scientific research.\par
Causal discovery methods often assume that the causal structures form directed acyclic graphs (DAGs) and that unobserved common causes are absent~\citep{Spirtes91,shimizu2006,shimizu2011,hoyer2009,Mooij:2009,peters2014}. If methods that assume the absence of unobserved variables are applied to data affected by unobserved variables, the causal graphs inferred by such methods are biased, and thus tend to be incorrect. The fast causal inference (FCI)~\citep{fci} and RFCI~\citep{colombo2012} both assume the presence of unobserved common causes and can present variable pairs with unobserved common causes. However, they infer causal relationships based on conditional independence, and thus cannot distinguish between causal graphs that entail the same sets of conditional independence.

Until recently, causal functional model-based approaches~\citep{shimizu2011,hoyer2009,Mooij:2009,zhang2009identifiability,petIdentifiability,peters2014} had not been used to explore causal models with unobserved variables. Causal functional model-based approaches assume that causal effects can be formulated with a specific form of functions. For example, LiNGAM~\citep{shimizu2006,shimizu2011} and additive noise models (ANMs)~\citep{hoyer2009} assume that the data generation process can be formulated as $x_i = f_i({\bf pa}_i)+n_i$, where $x_i$ is an observed variable, ${\bf pa}_i$ is the set of the direct causes ({\it parents}) of $x_i$, and $n_i$ is the external effect on $x_i$. These methods identify the causal direction between observed variables $x_i$ and $x_j$ as $x_j \rightarrow x_i$ if the residual of $x_i$ regressed on $x_j$ is independent of $x_j$ and the residual of $x_j$ regressed on $x_i$ is dependent of $x_i$. When analyzing data suited to the models, these approaches can identify the entire causal model.

Recently, a causal functional model-based method called repetitive causal discovery (RCD)~\citep{maeda20a}, an extension of DirectLiNGAM~\citep{shimizu2011}, was proposed. RCD infers causal graphs in which bi-directed edges represent variable pairs affected by unobserved common causes and directed edges represent the direct causal relationships between observed variables. The RCD method assumes that the causal relationships are linear and the external effects are non-Gaussian. It infers that $x_i$ and $x_j$ have unobserved common causes when the residual of $x_i$ regressed on $x_j$ is dependent of $x_j$, and vice versa. \citet{Janzing2009} proposed a method to identify the causal relationships between a pair of observed variables assuming that there exists a single unobserved common cause and that the causal functions are nonlinear. However, little research has been conducted on the discovery of causal structures with three or more observed variables, assuming nonlinear causal relationships and the presence of unobserved variables.\par
The effects that unobserved variables have on the relationships between observed variables are more complex in nonlinear cases than in linear cases. Assume that the causal effect from $x_j$ to $x_i$ is indirectly mediated through unobserved variable $y_k$ (i.e., $x_j \rightarrow y_k \rightarrow x_i$). Then, $(\exists f,\ [x_i - f(x_j) \indepe x_j])$ holds in linear cases but it does not hold in nonlinear cases (i.e., $(\forall f,\ [x_i - f(x_j) \notindepe x_j])$). Therefore, the causal relationship between $x_i$ and $x_j$ cannot be determined by regression methods. This is called a cascade ANM (CANM) and has been intensively discussed by \citet{caicausal}. However, their proposed method assumes that there is no unobserved common cause. Therefore, it is not applicable for inferring causal relationships between three or more observed variables.

Our study is aimed at extending causal additive models (CAMs)~\citep{buhlmann2014} to incorporate unobserved variables. CAMs are special cases of ANMs, and they assume that the structural equations are additive in the variables and error terms. We call our extended models {\it causal additive models with unobserved variables} (CAM-UV). In these models, we consider the identifiability of the causal relationships between observed variables. The theoretical results show that it is not possible to identify all the causal relationships, but it is possible to avoid incorrect inferences of causal relationships. We propose a method to infer causal relationships in CAM-UV. Assume that the data generation process is as shown in Figure~\ref{figure:intro}-(a), in which $y_1$ and $y_7$ are unobserved variables and the other nodes indicate observed variables. Ideally, the causal graph shown in Figure~\ref{figure:intro}-(b) should be recovered. However, our goal is to recover the causal graph shown in Figure~\ref{figure:intro}-(c) where the dashed undirected edges between $x_2$ and $x_3$ and between $x_4$ and $x_9$ indicate that their causal relationships cannot be identified based on our theoretical results.

The contributions of our study are as follows.
\begin{itemize}
	\item We show the identifiability of the causal relationships between observed variables in {\it causal additive models with unobserved variables} (CAM-UV).
	\item We propose a method to infer the causal graph of CAM-UV. Although the method cannot identify all the causal relationships, it can avoid incorrect inferences.
 	\item We provide experimental results on our method and compare them to existing methods using artificial data generated from CAM-UV and simulated functional magnetic resonance imaging (fMRI) data.
\end{itemize}

All the proofs are available in the Supplementary materials.

\section{Model definition}
\label{sec:problem}
Let $X=\{x_i\}$ denote the set of observed variables, $Y=\{y_i\}$ the set of unobserved variables, and $V=\{v_i\}$ the set of all the observed and unobserved variables ($V=X\cup Y$). We assume the data generation model is formulated as

\begin{equation}
	v_i= z_i + w_i + n_i,\ \ \  z_i = \sum_{x_j \in P_i} f_j^{(i)}(x_j),\ \ \ w_i=\sum_{y_k \in Q_i} f_k^{(i)}(y_k)	\label{eq:obdef2}
\end{equation}
where $z_i$ is the sum of the direct effects of observed variables on $v_i$, $w_i$ is the sum of the direct effects of unobserved variables on $v_i$, $f_j^{(i)}$ is a nonlinear function, $P_i$ is the set of observed direct causes of $v_i$, $Q_i$ is the set of unobserved direct causes of $v_i$, and $n_i$ is the external effect on $v_i$. We assume that all the external effects are mutually independent. We also assume that the causal structure of the observed and unobserved variables forms a DAG.

In addition, we impose Assumption~\ref{assumption:as1} (described below) on the causal functions and the external effects in a similar way to the Faithfulness assumption~\citep{pearl2000,spirtes2000}. According to Equation~\ref{eq:obdef2}, all the observed and unobserved variables are mixtures of external effects generated by the causal functions. In addition, there is an external effect influencing two variables if the two variables have a common {\it ancestor} (direct or indirect cause) or there is a direct or indirect effect between them. In Assumption~\ref{assumption:as1}, we assume that such variables are mutually dependent.
\begin{assumption}
	\label{assumption:as1}
We assume that all the causal functions and the external effects in CAM-UV satisfy the following condition: If variables $v_i$ and $v_j$ have terms involving functions of the same external effect $n_k$ , then $v_i$ and $v_j$ are mutually dependent (i.e., $(n_k\notindepe v_i)\land (n_k\notindepe v_j)\Rightarrow (v_i \notindepe v_j ) $). 
\end{assumption}

\section{Identifiability}
\label{sec:identifiability}
In this section, we consider the identifiability of the causal relationships between observed variables in CAM-UV.

First, we provide Definitions~\ref{defn:def1} and \ref{defn:def2}, which are used in the analysis of the identifiability. The explanatory chart for the definitions is shown in Figure~\ref{figure:path}.

\begin{figure}[t]
\centering
\includegraphics[width=8.3cm]{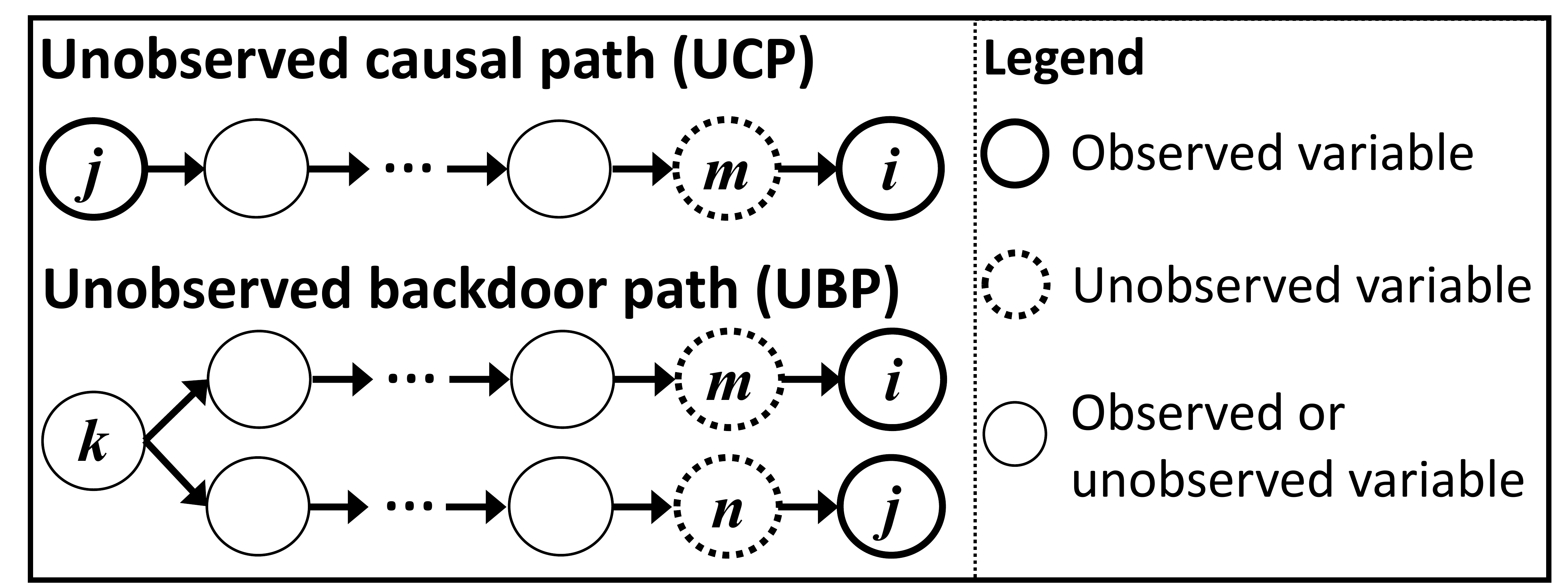}
\caption{Definitions of an unobserved causal path (UCP) and an unobserved backdoor path (UBP).}
\label{figure:path}
\end{figure}

\begin{defn}\label{defn:def1}
A directed path from an observed variable to another is called a {\it causal path} (CP). A CP from $x_j$ to $x_i$ is called an {\it unobserved causal path} (UCP) if it ends with the directed edge connecting $x_i$ and its unobserved direct cause (i.e., $x_j\rightarrow \cdots \rightarrow y_m\rightarrow x_i$ where $y_m$ is an unobserved direct cause of $x_i$).
\end{defn}

\begin{defn}\label{defn:def2}
An undirected path between $x_i$ and $x_j$ is called a {\it backdoor path} (BP) if it consists of the two directed paths from a common ancestor of $x_i$ and $x_j$ to $x_i$ and $x_j$ (i.e., $x_i\leftarrow \cdots \leftarrow v_k \rightarrow \cdots \rightarrow x_j$, where $v_k$ is the common ancestor). A BP between $x_i$ and $x_j$ is called an {\it unobserved backdoor path} (UBP) if it starts with the edge connecting $x_i$ and its unobserved direct cause, and ends with the edge connecting $x_j$ and its unobserved direct cause (i.e., $x_i\leftarrow y_m \leftarrow \cdots \leftarrow v_k \rightarrow \cdots \rightarrow y_n \rightarrow x_j$, where $v_k$ is the common ancestor and $y_m$ and $y_n$ are the unobserved direct causes of $x_i$ and $x_j$, respectively). The undirected path $x_i\leftarrow y_k \rightarrow x_j$ is also a UBP, as $v_k$, $y_m$, and $y_n$ can be the same variable.

\end{defn}

We impose Assumption~\ref{assumption:ass2} on the regression functions $G_i$ used in the lemmas provided in this section. 
\begin{assumption}\label{assumption:ass2}
Let $M$ and $N$ denote sets satisfying $M\subseteq X$ and $N\subseteq X$ where $X$ is the set of all the observed variables in CAM-UV defined in Section~\ref{sec:problem}. We assume that functions $G_i(M)$ take the forms of generalized additive models (GAMs)~\citep{hastie1990generalized} such that $G_i(M)=\sum_{x_m\in M}g_{i,m}(x_m)$ where each $g_{i,m}(x_m)$ is a nonlinear function of $x_m$. In addition, we assume that functions $G_i$ satisfy the following condition: When both $(x_i-G_i(M))$ and $(x_j-G_j(N))$ have terms involving functions of the same external effect $n_k$, then $(x_i-G_i(M))$ and $ (x_j-G_j(N))$ are mutually dependent  (i.e., $(n_k\notindepe x_i-G_i(M))\land (n_k\notindepe x_j-G_j(N))\Rightarrow ((x_i-G_i(M)) \notindepe (x_j-G_j(N)) ) $).
\end{assumption}

We first show the difference between linear and nonlinear cases of how UCPs and UBPs affect the identifiability of causality. If there is a UCP $x_j\rightarrow y_k\rightarrow x_i$, then $\exists a\in \mathbb{R}, [x_i-ax_j\indepe x_j]$ holds in linear cases~\citep{shimizu2011} but $\forall g, [x_i-g(x_j)\notindepe x_j]$ holds in nonlinear cases. That is, there is no regression function such that the residual of $x_i$ regressed on $x_j$ is independent of $x_j$. The observed variable $x_i$ is formulated as $x_i= f^{(i)}_k(f^{(k)}_j(x_j)+n_k)+n_i$. When $f^{(i)}_k$ is a nonlinear function, it cannot be represented as the linear sum of functions of $x_j$ and $n_k$ such as $f^{(i)}_k(f^{(k)}_j(x_j)+n_k) = s(x_j)+t(n_k)$. Therefore, $g(x_j)$ cannot cancel out terms containing $x_j$ from $x_i$ because $g(x_j)$ does not contain $n_k$. Therefore, when there is a UCP between $x_i$ and $x_j$, the causal relationship between $x_i$ and $x_j$ cannot be identified through regression and independence tests.\\
When there is a UBP, there is also a difference in the identifiability of causality between linear and nonlinear cases. In linear cases, the causal relationship between $x_i$ and $x_j$ can be identified if there is a set of observed variables $M\subseteq X\setminus\{x_i,x_j\}$ that blocks all the BPs between $x_i$ and $x_j$. That is, there exists a variable $x_l\in M$ on each BP~\citep{maeda20a}. In nonlinear cases, the causal relationship between $x_i$ and $x_j$ cannot be identified when there is a UBP, regardless of whether it is blocked by observed variables. Let $v_m$ denote the common ancestor of $x_i$ and $x_j$ on a UBP. Because the directed paths from $v_m$ to $x_i$ and to $x_j$ end with their unobserved direct causes, the effect of $v_m$ cannot be removed from $x_i$ or $x_j$ by regression. Therefore, the causal relationship between $x_i$ and $x_j$ cannot be identified when there is a UBP.

In the following, we provide lemmas about the identifiability of causal relationships in CAM-UV. Lemma~\ref{lem:lem1} is about the conditions in which the causal relationship between two observed variables cannot be identified. Lemma~\ref{lem:lem2} is about the condition in which the absence of the direct causal relationship between two observed variables can be identified. Finally, Lemma~\ref{lem:lem3} is about the condition in which the existence and direction of the direct causal relationship between two observed variables can be identified. We provide Examples~1, 2, and 3 for Lemmas~\ref{lem:lem1}, \ref{lem:lem2}, and \ref{lem:lem3}, respectively.

\begin{figure}[t]
\centering
\includegraphics[width=7.8cm]{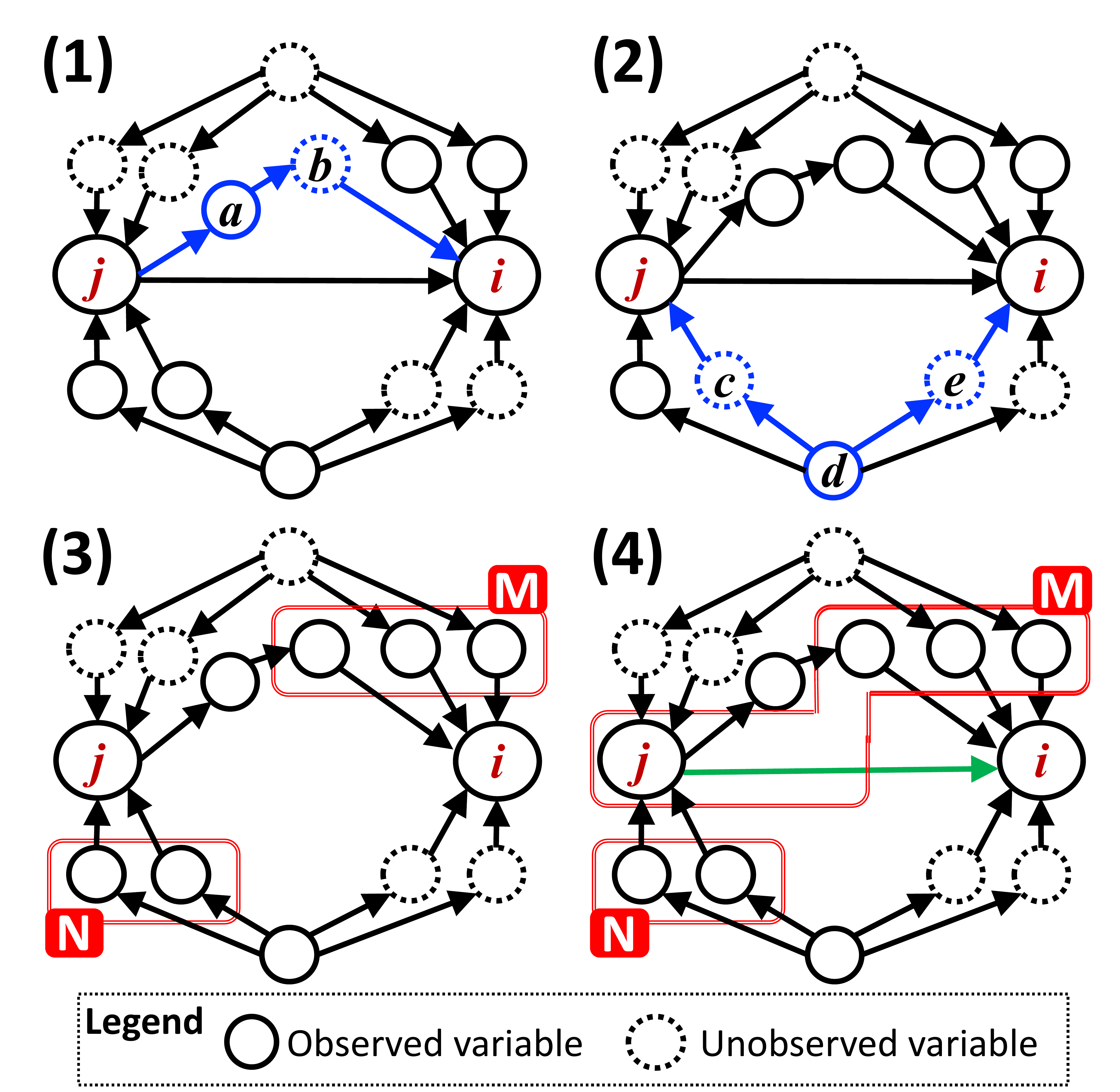}
\caption{Causal relationships in multivariate cases.}
\label{figure:lemma}
\end{figure}

\begin{lem}
\label{lem:lem1}
Assume the data generation process of the variables is CAM-UV as defined in Section~\ref{sec:problem}. If and only if Equation~\ref{eq:lem1} is satisfied, there is a UCP or UBP between $x_i$ and $x_j$ where $G_1$ and $G_2$ denote regression functions satisfying Assumption~\ref{assumption:ass2}.
\begin{align}
\begin{aligned}
\label{eq:lem1}
	&\forall G_1, G_2, M \subseteq (X \setminus \{x_i\}), N \subseteq (X \setminus \{x_j\}),\\
	& \left[\left(x_i - G_1(M)\right) \notindepe \left(x_j-G_2(N)\right) \right] 
\end{aligned}
\end{align}
Equation~\ref{eq:lem1} indicates that the residual of $x_i$ regressed on any subset of $X\setminus\{x_i\}$ and the residual of $x_j$ regressed on any subset of $X\setminus\{x_j\}$ cannot be mutually independent.
\end{lem}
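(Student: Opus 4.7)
The plan is to establish the biconditional by tracking, for each external effect $n_k$ in the system, whether it appears in the residuals $x_i - G_1(M)$ and $x_j - G_2(N)$, and then to invoke the faithfulness-like clause of Assumption~\ref{assumption:ass2}.

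For the forward direction, suppose a UCP or UBP between $x_i$ and $x_j$ exists; I exhibit a shared external effect $n_k$ that appears in both residuals for every choice of $G_1, G_2, M, N$. In the UCP case $x_j \to \cdots \to y_m \to x_i$ (so $y_m \in Q_i$), take $n_k = n_j$: on the $x_j$ side, $n_j$ enters $x_j$ additively, while on the $x_i$ side, $n_j$ enters through the outermost nonlinear map $f^{(i)}_m$ composed with the external noise $n_{y_m}$ of the unobserved terminal parent. Because $f^{(i)}_m$ is nonlinear and no observed variable contains $n_{y_m}$ in its composition, the mixed term $f^{(i)}_m(\cdots + n_{y_m})$ cannot be eliminated by any $G_1(M) = \sum_{x_l \in M} g_{1,l}(x_l)$ of GAM form, so $n_j \notindepe x_i - G_1(M)$; the same reasoning shows $n_j \notindepe x_j - G_2(N)$, since cancellation of the linear $n_j$ term in $x_j$ by additive functions of observed descendants would itself require unavailable cross terms. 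The UBP case $x_i \leftarrow y_m \leftarrow \cdots \leftarrow v_k \to \cdots \to y_n \to x_j$ is analogous, taking $n_k$ to be the external effect of $v_k$ and using that both $y_m$ and $y_n$ are unobserved direct causes. Assumption~\ref{assumption:ass2} then yields $(x_i - G_1(M)) \notindepe (x_j - G_2(N))$.

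For the backward direction, suppose no UCP and no UBP between $x_i$ and $x_j$. I choose $M = P_i$, $N = P_j$, and let $G_1 = \sum_{x_l \in P_i} f^{(i)}_l$ and $G_2 = \sum_{x_l \in P_j} f^{(j)}_l$ be the true causal functions on the observed parents. By Equation~\ref{eq:obdef2}, $x_i - G_1(M) = w_i + n_i$ and $x_j - G_2(N) = w_j + n_j$. Since the external effects are mutually independent by Section~\ref{sec:problem}, it suffices to show that $w_i + n_i$ and $w_j + n_j$ depend on disjoint sets of external effects. Absence of a UCP from $x_j$ to $x_i$ means $x_j$ is not an ancestor of any $y_m \in Q_i$, so $n_j$ does not appear in $w_i$; symmetrically, $n_i$ does not appear in $w_j$. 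Absence of a UBP forces $Q_i \cap Q_j = \emptyset$ and rules out any common ancestor of a member of $Q_i$ and a member of $Q_j$, whence the ancestral external effects of $Q_i$ and $Q_j$ in the DAG are disjoint, so $w_i$ and $w_j$ share no external effect. Independence of the residuals follows.

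The main obstacle is the ``cannot cancel'' sub-argument in the forward direction: one must show that, even when $M$ or $N$ includes variables on the UCP/UBP itself or descendants of $x_i, x_j$, the nonlinear intertwining between the shared noise $n_k$ and the unobserved node's noise $n_{y_m}$ cannot be removed by a sum of univariate functions of observed variables. This is precisely what the faithfulness-type clause of Assumption~\ref{assumption:ass2} is designed to certify, and bridging the operational statement about GAM regressions to that assumption will be the technical heart of the proof.
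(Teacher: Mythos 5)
Your overall strategy coincides with the paper's: for the ``no UCP/UBP'' direction you take $M=P_i$, $N=P_j$ with the true causal functions as $G_1,G_2$, so the residuals reduce to $w_i+n_i$ and $w_j+n_j$, and you argue these involve disjoint sets of external effects; for the other direction you exhibit a shared external effect that survives any regression and invoke the dependence clause. The backward direction is essentially the paper's argument and is fine (modulo the small wording point that ``common ancestor'' must be read as ancestor-or-self, so that a member of $Q_i$ that is an ancestor of a member of $Q_j$ is also covered, as the paper's UBP definition allows).

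The genuine gap is in the forward direction, in the step you yourself flag as the technical heart: showing that the shared noise cannot be cancelled, i.e.\ $n_j \notindepe x_i - G_1(M)$ (and its analogues) for \emph{every} $G_1$ and every $M\subseteq X\setminus\{x_i\}$. Your concrete justification rests on the assertion that ``no observed variable contains $n_{y_m}$ in its composition,'' which is false in general: the unobserved direct cause $y_m$ may have other observed children or descendants, and $M$ may also contain observed descendants of $x_i$, all of which do carry $n_{y_m}$ (and $n_j$), so candidate cancellers exist and must be excluded by argument. That exclusion is exactly the content of the paper's auxiliary Lemma~\ref{lem:hosoku}: any $v_l\in M$ carrying the relevant noise must be a descendant of the variable in question, hence under a nonlinear $g_l$ its contribution entangles that noise with $v_l$'s own external effect $n_l$, which the target variable does not contain, making full cancellation impossible. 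Deferring this to Assumption~\ref{assumption:ass2} does not work: that assumption only upgrades ``both residuals contain terms involving the same external effect'' to dependence of the residuals; it says nothing about whether the residuals retain such terms after regression on sets that include descendants. So the proposal is missing the key cancellation lemma (the paper also routes the forward direction through the intermediate reduction of Equation~\ref{eq:lem1} to $(w_i+n_i)\notindepe(w_j+n_j)$ and the disjunction $(w_i\notindepe n_j)\lor(n_i\notindepe w_j)\lor(w_i\notindepe w_j)$, but that is presentational; the substantive omission is the Lemma~\ref{lem:hosoku}-type argument).
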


\begin{exm}
	In Figure~\ref{figure:lemma}-(1), there is a UCP from $x_j$ to $x_i$ (i.e., $x_j\rightarrow x_a\rightarrow y_b\rightarrow x_i$). In Figure~\ref{figure:lemma}-(2), there is a UBP between $x_j$ and $x_i$ (i.e., $x_j\leftarrow y_c\leftarrow x_d \rightarrow y_e\rightarrow x_i$). In these cases, the common effects on $x_i$ and $x_j$ cannot be fully removed by any regression function $G_1$ or $G_2$, because the effects conveyed by the UBP or UCP cannot be removed by $G_1$ or $G_2$. 
\end{exm}

\begin{lem}
\label{lem:lem2}
Assume the data generation process of the variables is CAM-UV as defined in Section~\ref{sec:problem}. If and only if Equation~\ref{eq:lem2} is satisfied, there is no direct causal relationship between $x_i$ and $x_j$, and there is no UCP or UBP between $x_i$ and $x_j$ where $G_1$ and $G_2$ denote regression functions satisfying Assumption~\ref{assumption:ass2}.
\begin{align}
\begin{aligned}
\label{eq:lem2}
	&\exists G_1, G_2, M \subseteq (X \setminus \{x_i,x_j\}), N \subseteq (X \setminus \{x_i,x_j\}),\\
	&[(\left(x_i - G_1(M)\right) \indepe \left(x_j-G_2(N)\right))] 
\end{aligned}
\end{align}
Equation~\ref{eq:lem2} indicates that there are regression functions such that the residuals of $x_i$ and $x_j$ regressed on subsets of $X\setminus\{x_i,x_j\}$ are mutually independent.
\end{lem}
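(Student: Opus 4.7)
The plan is to prove the biconditional in two directions, using Lemma~\ref{lem:lem1} to dispatch the UCP/UBP part and handling ``no direct edge'' as a separate ingredient via the additive CAM structure and Assumption~\ref{assumption:ass2}.

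For the forward direction, I would suppose Eq.~\ref{eq:lem2} holds with witnesses $G_1, G_2, M, N$. Since $X\setminus\{x_i, x_j\}$ is contained in both $X\setminus\{x_i\}$ and $X\setminus\{x_j\}$, these witnesses also falsify Eq.~\ref{eq:lem1}, so Lemma~\ref{lem:lem1} immediately gives the absence of UCPs and UBPs between $x_i$ and $x_j$. To obtain the absence of a direct edge I would argue by contradiction: suppose, without loss of generality, $x_j \to x_i$, so $x_i$ contains the additive term $f_j^{(i)}(x_j)$. Because $x_j \notin M$ and $G_1(M) = \sum_{x_m \in M} g_{1,m}(x_m)$ is a GAM over variables other than $x_j$, the residual $x_i - G_1(M)$ retains a functional dependence on $n_j$ (via $f_j^{(i)}(x_j)$ together with the decomposition $x_j = z_j + w_j + n_j$). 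Symmetrically, $x_j - G_2(N)$ carries an $n_j$ term because $x_j \notin N$. Assumption~\ref{assumption:ass2} then forces the two residuals to be dependent, contradicting Eq.~\ref{eq:lem2}.

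For the backward direction, assume the three negatives: no direct edge, no UCP, no UBP between $x_i$ and $x_j$. Take $M$ to be the observed direct causes $P_i$ of $x_i$ and $N$ the observed direct causes $P_j$ of $x_j$; both lie in $X\setminus\{x_i, x_j\}$ precisely because of the no-direct-edge hypothesis. Set $G_1(M) = \sum_{x_m \in P_i} f_m^{(i)}(x_m)$ and $G_2(N) = \sum_{x_m \in P_j} f_m^{(j)}(x_m)$, matching the structural equations of CAM-UV exactly. Then $x_i - G_1(M) = \sum_{y_l \in Q_i} f_l^{(i)}(y_l) + n_i$ and $x_j - G_2(N) = \sum_{y_l \in Q_j} f_l^{(j)}(y_l) + n_j$, so both residuals depend only on external effects of unobserved direct causes and their own noises. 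Absence of UCPs prevents $x_i$ (resp.\ $x_j$) from being an ancestor of any variable feeding the other residual through $Q_j$ (resp.\ $Q_i$), and absence of UBPs prevents a shared ancestor from simultaneously feeding both $Q_i$ and $Q_j$. Hence the two residuals depend on disjoint sets of external effects and are mutually independent, giving the required witness for Eq.~\ref{eq:lem2}.

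The main obstacle will be the forward direction's direct-edge argument: rigorously ruling out cancellation of the $n_j$ contribution in $x_i - G_1(M)$ when some $x_m \in M$ is itself a descendant of $x_j$ and therefore carries its own $n_j$-dependence through $g_{1,m}(x_m)$. This is essentially a faithfulness-style non-degeneracy claim that Assumption~\ref{assumption:ass2} is designed to encode, so the formal argument will need to trace the additive decomposition of each $x_m$ into external-effect contributions and invoke Assumption~\ref{assumption:ass2} at the end to conclude dependence of the two residuals from their shared $n_j$ content.
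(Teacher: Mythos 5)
Your overall architecture coincides with the paper's: any witnesses $(G_1,G_2,M,N)$ for Equation~\ref{eq:lem2} are admissible choices in Equation~\ref{eq:lem1} (since $X\setminus\{x_i,x_j\}$ sits inside both $X\setminus\{x_i\}$ and $X\setminus\{x_j\}$), so Lemma~\ref{lem:lem1} disposes of UCPs and UBPs; the no-direct-edge part is a contradiction via shared $n_j$ content plus Assumption~\ref{assumption:ass2}; and your converse construction ($M=P_i$, $N=P_j$, $G_1=z_i$, $G_2=z_j$, disjoint sets of external effects) is exactly the construction the paper itself uses (inside its proof of Lemma~\ref{lem:lem3}); indeed you spell out the converse more explicitly than the paper's own proof of this lemma, which only argues the forward implication. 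The genuine gap is the step you yourself flag as ``the main obstacle'': you never show that $x_i-G_1(M)$ actually retains a term involving $n_j$ when $M$ may contain descendants of $x_j$ (and likewise for $x_j-G_2(N)$). Assumption~\ref{assumption:ass2} does not ``encode'' this; it only upgrades ``both residuals have terms involving functions of $n_j$'' to dependence. The non-cancellation claim itself is exactly what fails in the linear setting, where a regressor that is a descendant of $x_j$ can cancel the $n_j$ contribution exactly; so it cannot be dismissed as a faithfulness-style stipulation and must be proved from nonlinearity.

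The paper supplies this missing piece as a separate auxiliary result (Lemma~\ref{lem:hosoku} in the supplement) and its proof of Lemma~\ref{lem:lem2} consists of little more than invoking it: if some $g_m(x_m)$ with $x_m\in M$ were to cancel the $n_j$ content, then $x_m$ must be a descendant of $x_j$, so $x_m=u(n_j,T)+n_m$; nonlinearity of $g_m$ produces a jointly entangled term $a(n_j,n_m)$ that cannot be split as $b(n_j)+c(n_m)$, while $x_j$, being an ancestor of $x_m$, is independent of $n_m$ and carries no such joint term, so subtracting $g_m(x_m)$ cannot remove all $n_j$ terms without leaving uncancelled $n_m$ terms. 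Until your proposal contains this argument (or an equivalent one), the forward direction is incomplete at precisely the point that distinguishes the nonlinear model from the linear one; with it, your proof matches the paper's and additionally covers the backward direction.
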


\begin{exm}
	In Figure~\ref{figure:lemma}-(3), there is no UCP or UBP between $x_i$ and $x_j$, and there is no direct causal relationship between $x_i$ and $x_j$. In Figure~\ref{figure:lemma}-(3), $M$ and $N$ are direct causes of $x_i$ and $x_j$, and they correspond to $M$ and $N$ in Equation~\ref{eq:lem2}. They block all the BPs and CPs between $x_i$ and $x_j$.
\end{exm}

\begin{lem}
\label{lem:lem3}
Assume the data generation process of the variables is CAM-UV as defined in Section~\ref{sec:problem}. If and only if Equations~\ref{eq:lem3-1} and \ref{eq:lem3-2} are satisfied, $x_j$ is a direct cause of $x_i$, and there is no UCP or UBP between $x_i$ and $x_j$ where $G_1$ and $G_2$ denote regression functions satisfying Assumption~\ref{assumption:ass2}.
\begin{align}
\begin{aligned}
\label{eq:lem3-1}
	&\forall G_1, G_2, M \subseteq (X \setminus \{x_i,x_j\}), N \subseteq (X \setminus \{x_j\}),\\
	& \left[\left(x_i - G_1(M)\right) \notindepe \left(x_j-G_2(N)\right) \right] 
\end{aligned}
\end{align}
\begin{align}
\begin{aligned}
\label{eq:lem3-2}
	&\exists G_1, G_2, M \subseteq (X \setminus \{x_i\}), N \subseteq (X \setminus \{x_i,x_j\}),\\
	& \left[\left(x_i - G_1(M)\right) \indepe \left(x_j-G_2(N)\right) \right] 
\end{aligned}
\end{align}
Equation~\ref{eq:lem3-1} indicates that the residual of $x_i$ regressed on any subset of $X\setminus\{x_i,x_j\}$ and the residual of $x_j$ regressed on any subset of $X\setminus\{x_j\}$ cannot be mutually independent. Equation~\ref{eq:lem3-2} indicates that there are regression functions such that the residual of $x_i$ regressed on a subset of $X\setminus\{x_j\}$ and the residual of $x_j$ regressed on a subset of $X\setminus\{x_i,x_j\}$ are mutually independent.

\end{lem}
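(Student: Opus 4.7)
The plan is to reduce Lemma~\ref{lem:lem3} to Lemmas~\ref{lem:lem1} and~\ref{lem:lem2}, and then to pin down the orientation of the edge using the faithfulness content of Assumption~\ref{assumption:ass2}. I would split the argument into the two directions of the iff.

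For the forward direction, assume Equations~\ref{eq:lem3-1} and~\ref{eq:lem3-2} hold. The witness $(G_1,G_2,M,N)$ provided by Equation~\ref{eq:lem3-2} satisfies $M\subseteq X\setminus\{x_i\}$ and $N\subseteq X\setminus\{x_i,x_j\}\subseteq X\setminus\{x_j\}$, which lies inside the universal domain of Equation~\ref{eq:lem1}; the existence of such an independent pair therefore rules out, by the contrapositive of Lemma~\ref{lem:lem1}, any UCP or UBP between $x_i$ and $x_j$. Next, restricting $N$ in Equation~\ref{eq:lem3-1} to $X\setminus\{x_i,x_j\}$ yields dependence for every $G_1,G_2$ and every $M,N\subseteq X\setminus\{x_i,x_j\}$, contradicting Equation~\ref{eq:lem2}; combined with the already-proved absence of UCP and UBP, Lemma~\ref{lem:lem2} then forces a direct causal relationship between $x_i$ and $x_j$. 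To orient this edge, I proceed by contradiction: suppose $x_i\to x_j$ is a direct edge. For any $M\subseteq X\setminus\{x_i\}$, the residual $x_i-G_1(M)$ retains the additive noise $n_i$, since $G_1(M)=\sum_{x_k\in M}g_{i,k}(x_k)$ is a sum of nonlinear functions of variables distinct from $x_i$ and hence cannot cancel a pure $n_i$ summand; for any $N\subseteq X\setminus\{x_i,x_j\}$, the residual $x_j-G_2(N)$ also depends on $n_i$, because the edge $x_i\to x_j$ contributes the nonlinear term $f_i^{(j)}(x_i)$ to $x_j$ and no element of $N$ can be used to subtract off $x_i$ itself. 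Assumption~\ref{assumption:ass2} applied with $n_k=n_i$ then yields dependence of the two residuals for every admissible quadruple, contradicting Equation~\ref{eq:lem3-2}; hence the edge must be oriented $x_j\to x_i$.

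For the backward direction, assume $x_j\to x_i$ is a direct edge and there is no UCP or UBP between $x_i$ and $x_j$. To establish Equation~\ref{eq:lem3-1}, fix $M\subseteq X\setminus\{x_i,x_j\}$ and $N\subseteq X\setminus\{x_j\}$ arbitrarily: the direct edge $x_j\to x_i$ injects $n_j$ into $x_i$ and, because $x_j\notin M$, no nonlinear regression $G_1(M)$ on the remaining observed variables can cancel this $n_j$ contribution; $n_j$ is trivially retained by $x_j-G_2(N)$ since $x_j\notin N$. Assumption~\ref{assumption:ass2} with $n_k=n_j$ then delivers the required dependence. To establish Equation~\ref{eq:lem3-2}, I exhibit an explicit witness: take $M=P_i$ (so $x_j\in M\subseteq X\setminus\{x_i\}$), $N=P_j$ (and $x_i\notin P_j$ since the graph is a DAG, so $N\subseteq X\setminus\{x_i,x_j\}$), and let $G_1,G_2$ match the true causal regressions $z_i,z_j$. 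Then $x_i-G_1(M)=w_i+n_i$ and $x_j-G_2(N)=w_j+n_j$. Absence of UCP between $x_i$ and $x_j$ in both directions forces $w_i$ not to involve $n_j$ and $w_j$ not to involve $n_i$; absence of any UBP rules out any common ancestor of an unobserved direct cause of $x_i$ and an unobserved direct cause of $x_j$, so $w_i$ and $w_j$ share no external noise at all. Hence the two residuals are functions of disjoint sets of mutually independent external effects, and are themselves independent.

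The step I expect to be the main obstacle is the forward-direction argument ruling out the reverse edge $x_i\to x_j$: it requires the additive GAM form of the regressors together with the faithfulness content of Assumption~\ref{assumption:ass2} to ensure that the noise $n_i$ really does persist in both residuals and cannot be accidentally cancelled by any nonlinear combination of the remaining variables. All other parts are either direct invocations of Lemmas~\ref{lem:lem1} and~\ref{lem:lem2} on appropriately restricted subdomains, or explicit constructions using the true parent sets and the true additive component functions.
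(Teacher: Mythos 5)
Your overall route is the same as the paper's: you use the witness of Equation~\ref{eq:lem3-2} (whose domains sit inside the universally quantified domains of Equation~\ref{eq:lem1}) to rule out UCPs and UBPs via Lemma~\ref{lem:lem1}; you restrict $N$ in Equation~\ref{eq:lem3-1} to $X\setminus\{x_i,x_j\}$ to negate Equation~\ref{eq:lem2} and obtain a direct causal relationship via Lemma~\ref{lem:lem2}; you orient the edge by showing that $x_i\rightarrow x_j$ would force $n_i$ into both residuals, whence Assumption~\ref{assumption:ass2} gives dependence for every admissible choice, contradicting Equation~\ref{eq:lem3-2}; and you verify the converse with the explicit witness $G_1(M)=z_i$, $G_2(N)=z_j$, so that the residuals are $w_i+n_i$ and $w_j+n_j$, which share no external effects in the absence of a UCP or UBP. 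This is precisely the structure of the paper's proof (you in fact spell out the backward check of Equation~\ref{eq:lem3-1} more explicitly than the paper does).

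The one place where your write-up is not yet a complete argument is the non-cancellation claims, e.g.\ ``$G_1(M)$ is a sum of nonlinear functions of variables distinct from $x_i$ and hence cannot cancel a pure $n_i$ summand,'' and likewise ``no element of $N$ can be used to subtract off $x_i$ itself.'' As stated, the ``hence'' does not follow: a regressor $x_k\in M$ that is a \emph{descendant} of $x_i$ is itself a function of $n_i$, so $g_{i,k}(x_k)$ does carry $n_i$-dependent terms and could conceivably cancel them. The paper isolates exactly this step as an auxiliary result (Lemma~\ref{lem:hosoku} in the supplement): writing the descendant as $x_k=u(n_i,T)+n_k$, the nonlinearity of $g_{i,k}$ entangles $n_i$ non-additively with $x_k$'s own noise $n_k$, and since $x_i$ (an ancestor of $x_k$) is independent of $n_k$, no matching term exists in $x_i$ to be cancelled; the same argument is what justifies the retention of $n_i$ in $x_j-G_2(N)$ in your orientation step and the retention of $n_j$ in your proof of Equation~\ref{eq:lem3-1}. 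You correctly flag this as the main obstacle, and once that sub-lemma is supplied your proposal is correct and coincides with the paper's proof.
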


\begin{exm}
\label{exm:exm3}
	In Figure~\ref{figure:lemma}-(4), no UCP or UBP exists between $x_j$ and $x_i$. There is a direct causal relationship between $x_j$ and $x_i$. In Figure~\ref{figure:lemma}-(4), $M$ and $N$ are direct causes of $x_i$ and $x_j$, and they correspond to $M$ and $N$ in Equation~\ref{eq:lem3-2}. They block all the BPs and CPs between $x_i$ and $x_j$ including the direct causal effect of $x_j$ on $x_i$ (i.e., $x_j\rightarrow x_i$).
\end{exm}

Although it is impossible to identify the causal relationship between $x_i$ and $x_j$ when there is a UCP or UBP, it is possible to avoid the incorrect determination of the causal relationship if we use Lemma~\ref{lem:lem1}. If there is no UCP or UBP, it is possible to identify the direct causal relationship between $x_i$ and $x_j$ using Lemmas~\ref{lem:lem2} and~\ref{lem:lem3}.

Next, we provide Lemma~\ref{lem:lem4}, which can be used for identifying a $sink$ of a set of observed variables. Let $K$ denote a set of observed variables. Observed variable $x_i$ is called a {\it sink} of $K$ when $x_i\in K$ holds, and each $x_j\in K\setminus\{x_i\}$ is not a descendant of $x_i$. Example~\ref{exm:4} is provided after Lemma~\ref{lem:lem4}.

\begin{lem}\label{lem:lem4}
	Assume the data generation process of the variables is CAM-UV as defined in Section~\ref{sec:problem}. Let $K$ denote a set satisfying $K\subseteq X$ and assume $x_i\in K$. If Equation~\ref{eq:lem4} holds, each $x_j\in K\setminus\{x_i\}$ is not a descendant of $x_i$ where $G_i^1$, $G_j^1$, $G_i^2$, and $G_j^2$ denote regression functions satisfying Assumption~\ref{assumption:ass2}.
\begin{align}
\begin{aligned}\label{eq:lem4}
	&\exists G_i^1, M_i\subseteq (X\setminus K),\\
	&\forall x_j \in (K\setminus\{x_i\}),\exists G_j^1,M_j\subseteq (X\setminus K),\forall G_i^2,G_j^2\\
	&[((x_i-G_i^1(M_i\cup K\setminus \{x_i\}))\indepe (x_j-G_j^1(M_j)))\\
	&\land((x_i-G_i^2(M_i))\notindepe (x_j-G_j^2(M_j)))]
\end{aligned}
\end{align}
Equation~\ref{eq:lem4} indicates that there exists set $M_j\subseteq (X\setminus K)$ for each $x_j\in K$ satisfying the condition that the residual of $x_i$ regressed on $(M_i\cup K\setminus \{x_i\})$ is independent of the residual of $x_j$ regressed on $M_j$ for each $x_j\in K\setminus\{x_i\}$. In addition, the residual of $x_i$ regressed on $M_i$ cannot be independent of the residual of $x_j$ regressed on $M_j$ for each $x_j\in K\setminus\{x_i\}$.
\end{lem}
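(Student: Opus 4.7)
The plan is a proof by contradiction: assume some $x_j \in K\setminus\{x_i\}$ is a descendant of $x_i$, and contradict the independence conjunct of Equation~\ref{eq:lem4}.

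First I would feed the independence witness into Lemma~\ref{lem:lem1}. The quadruple $(G_i^1, G_j^1, M_i\cup(K\setminus\{x_i\}), M_j)$ supplied by Equation~\ref{eq:lem4} has regressor sets contained in $X\setminus\{x_i\}$ and $X\setminus\{x_j\}$ respectively, so the residual independence it asserts furnishes a concrete witness negating the universally quantified Equation~\ref{eq:lem1}. By the contrapositive of Lemma~\ref{lem:lem1}, no UCP and no UBP between $x_i$ and $x_j$ can exist, so under the standing descendant hypothesis the terminal edge of any directed path from $x_i$ to $x_j$ must lie between two observed nodes.

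Next I would track the external effect $n_i$ through both residuals. Because $x_j$ is a descendant of $x_i$, the CAM-UV structural equations expand $x_j$ as an additive combination containing a nonlinear function whose argument has $n_i$ mixed additively with other external effects along the path. Since $M_j \subseteq X\setminus K$ excludes $x_i$ and every variable in $M_j$ carries $n_i$ only through further nonlinear additive mixing with its own noise, the GAM $G_j^1(M_j)$ cannot cancel the $n_i$-bearing term, so $x_j - G_j^1(M_j)$ remains dependent on $n_i$. A symmetric bookkeeping argument shows that $x_i - G_i^1(M_i\cup(K\setminus\{x_i\}))$ also stays dependent on $n_i$: the regressor set now contains descendants of $x_i$ such as $x_j$ itself, but those descendants deliver $n_i$ only through nonlinear additive mixing with other noises, so a sum of univariate nonlinear functions cannot reproduce $n_i$ alone. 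Both residuals therefore share $n_i$, and Assumption~\ref{assumption:ass2} forces them to be mutually dependent, contradicting the independence conjunct of Equation~\ref{eq:lem4}. Hence no $x_j\in K\setminus\{x_i\}$ is a descendant of $x_i$.

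The second conjunct of Equation~\ref{eq:lem4} is not consumed in the deduction above; it instead certifies that $x_i$ and $x_j$ are causally connected through paths outside $K$, so that the sink-detection criterion carries real information rather than trivially holding for pairs already independent once conditioned outside $K$. The main obstacle will be the preservation step: rigorously arguing that a GAM built from univariate nonlinear functions cannot annihilate an external effect that enters the regressors only through nonlinear additive mixtures with other noises. I would discharge it by lifting Assumption~\ref{assumption:ass2} to the residual level and exploiting the nonlinearity enforced in the definition of CAM-UV, so that deciding whether $x - G(S)$ remains dependent on $n_k$ reduces to a bookkeeping check on which external effects survive in the additive expansion after subtracting a sum of univariate nonlinear functions; once that reduction is available, the rest of the argument is a short path-trace grounded in Lemma~\ref{lem:lem1}.
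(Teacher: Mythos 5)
There is a genuine gap, and it is exactly at the step you flag as the main obstacle. Your claim that ``every variable in $M_j$ carries $n_i$ only through further nonlinear additive mixing with its own noise, so $G_j^1(M_j)$ cannot cancel the $n_i$-bearing term'' is false: cancellation does not require reproducing $n_i$ alone, it only requires subtracting the whole term of $x_j$ that carries $n_i$, and that term can itself be a univariate function of a regressor. Concretely, suppose the only connection is $x_i \rightarrow x_m \rightarrow x_j$ with $x_m \in X\setminus K$, and take $M_j=\{x_m\}$. Then $x_j = f_m^{(j)}(x_m)+n_j$, and choosing $g_{j,m}=f_m^{(j)}$ gives $x_j-G_j^1(M_j)=n_j \indepe n_i$, so the residual of $x_j$ does \emph{not} stay dependent on $n_i$ even though $x_j$ is a descendant of $x_i$; indeed the first conjunct of Equation~\ref{eq:lem4} can be satisfied in this situation (the residual of $x_i$ regressed on $M_i\cup K\setminus\{x_i\}$ can be made independent of $n_j$). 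Lemma~\ref{lem:hosoku} forbids removing $n_i$ from a function of $v_i$ only when $v_i$ itself is excluded from the regressors; it says nothing when the $n_i$-bearing part of $x_j$ enters through an observed mediator that \emph{is} a regressor. So the contradiction you derive from the independence conjunct alone does not go through.

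This also shows that your concluding remark that ``the second conjunct is not consumed'' cannot be right: the first conjunct alone is consistent with $x_j$ being a descendant of $x_i$ (as in the example above), so any correct proof must use the dependence conjunct. That is precisely how the paper argues: assuming $x_j$ is a descendant, the requirement that $(x_i-G_i^2(M_i)) \notindepe (x_j-G_j^2(M_j))$ for \emph{all} $G_i^2,G_j^2$ forces the existence of a causal path from $x_i$ to $x_j$ that is not blocked by $M_j$ (otherwise one could choose $G_j^2$ to remove the mediated effect, as above), and the presence of such an unblocked path then makes the independence in the first conjunct impossible, yielding the contradiction. Your opening step (invoking Lemma~\ref{lem:lem1} to exclude UCPs and UBPs) is sound but peripheral; the missing ingredient is the use of the second conjunct to rule out the case where all of $x_i$'s influence on $x_j$ is mediated by observed variables in $M_j$.
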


\begin{exm}\label{exm:4}
In Figure~\ref{figure:methodgraph},	$K$ consists of three observed variables (i.e., $K=\{x_a,x_b,x_i\}$). The BP between $x_i$ and $x_a$ and the BP between $x_i$ and $x_b$ are blocked by $M_a$ and $M_b$ respectively. In addition, all the effects of $x_a$ and $x_b$ on $x_i$ are mediated by the direct causes of $x_i$ which are included in $K\setminus\{x_i\}$. Then, the residual of $x_i$ regressed on $M_i\cup K\setminus\{x_i\}$ can be independent of the residuals of $x_a$ and $x_b$ regressed on $M_a$ and $M_b$ respectively. In addition, the residual of $x_i$ regressed on $M_i$ cannot be independent of the residuals of $x_a$ and $x_b$ regressed on $M_a$ and $M_b$ respectively. These statements are formulated as Equation~\ref{eq:model1}, which can be generalized to Equation~\ref{eq:lem4}.
\begin{align}
\begin{aligned}\label{eq:model1}
	&((x_i-G_i^1(M_i\cup K\setminus\{x_i\}))\indepe (x_a-G_a^1(M_a)))\\
	&\land((x_i-G_i^1(M_i\cup K\setminus\{x_i\}))\indepe (x_b-G_b^1(M_b)))\\
	&\land((x_i-G_i^2(M_i))\notindepe (x_a-G_a^2(M_a)))\\
	&\land((x_i-G_i^2(M_i))\notindepe (x_b-G_b^2(M_b)))
\end{aligned}
\end{align}

\end{exm}

\begin{figure}[t]
\centering
\includegraphics[width=7.7cm]{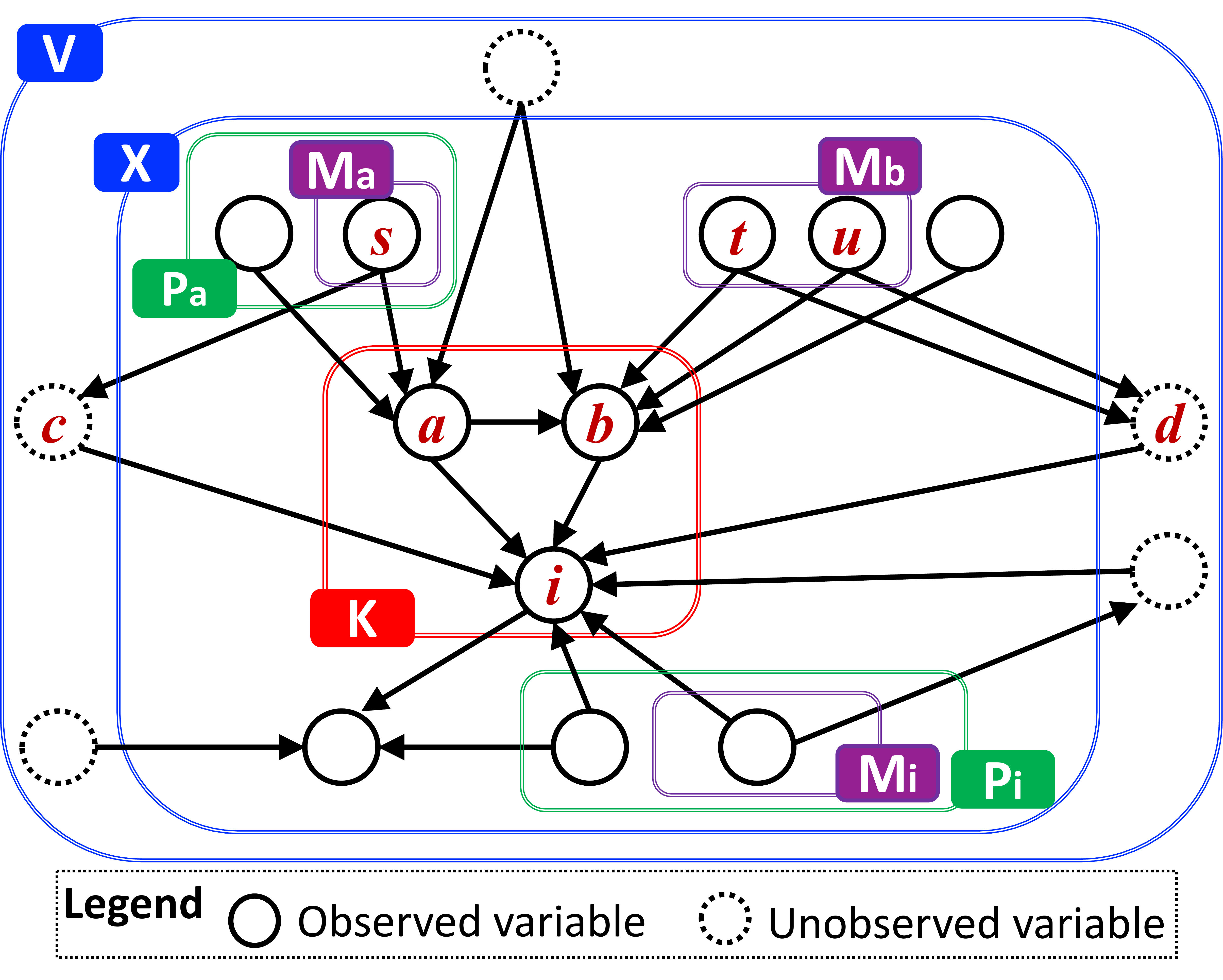}
\caption{Identification of $x_i$ as a sink of $K$.}
\label{figure:methodgraph}
\end{figure}

\section{Model estimation}
\label{section:proposed}

We propose a method to infer causal relationships between observed variables. The causal graphs inferred by our proposed method include directed edges and undirected dashed edges (see Figure~\ref{figure:intro}-(c)). A directed edge indicates a direct causal relationship, and an undirected dashed edge indicates that there is a UCP or UBP between the variables.

\begin{algorithm}[]
\SetKwProg{init}{initialization}{}{}
\DontPrintSemicolon
\KwIn{i.i.d samples of a $p$-dimensional distribution on $\{x_1,\cdots,x_p\}$ $X$, maximum number of variables to examine causality for each step $d$, significance level for independence test $\alpha$.}
\KwOut{the sets of the parents $\{M_1, \cdots, M_p \}$.}
\SetKwBlock{Begin}{function}{end function}
\Begin($\text{getDirectedEdges} {(} X, d, \alpha {)}$)
{
	{\sf\# PHASE 1: Extracting the candidates of the parents of each variable.}\;
	    \For{$i=1$ {\bf to} $p$}{
    	{\bf Initialize} $M_i \leftarrow \emptyset$.\;
    }
    {\bf Initialize} $t \leftarrow 2$.\;
    \While{$t\leq d$}{
    	{\bf Initialize} $noChange \leftarrow {\rm True}$.\;
    	\ForEach{$K\in\{K|K\subseteq X, |K|=t\}$}{
			{\sf\# Finding the most endogenous variable $x_b$ in {\it K}}\;
			$x_b\leftarrow \argmax_{x_i\in K}\reallywidehat{\text{p-HSIC}}(x_i-G_1(M_i\cup K\setminus\{x_i\}),\{x_j-G_2(M_j)| x_j\in K\setminus\{x_i\}\})$\;
			{\sf\# Computing the independence between the residuals in regard to Lemma~\ref{lem:lem4}}\;
			$e\leftarrow\reallywidehat{\text{p-HSIC}}(x_b-G_1(M_b\cup K\setminus\{x_b\}),\{x_j-G_2(M_j)| x_j\in K\setminus\{x_b\}\})$\;
			$h\leftarrow\displaystyle\max_{x_j\in K\setminus\{x_b\}}\reallywidehat{\text{p-HSIC}}(x_b-G_1(M_b), x_j-G_2(M_j))$\;
			{\sf\# Checking whether $x_b$ is really a sink of {\it $K$}}\;
			\If{$(\alpha < e)\land(\alpha > h)$}{
				{\sf\# When $x_b$ is a sink of $K$, add each variable in $K\setminus\{x_b\}$ to $M_b$}\;
				$M_b \leftarrow M_b\cup (K\setminus\{x_b\})$\;
				$noChange \leftarrow {\rm False}$\;
			}
    	}
    	{\sf\# If each $M_i$ remains unchanged, increment $t$ by one. If not, substitute $2$ for $t$.}\;
    	\uIf{$noChange={\rm True}$}{
    		$t\leftarrow t+1$\;
    	}\Else{
    		$t\leftarrow 2$\; 
    	}
    }
    {\sf\# PHASE 2: Determining the parents of each variable.}\;
    \For{$i=1$ {\bfseries to} $p$}{
    	\ForEach{$x_j\in P_i$}{
    		{\sf\# Checking whether $x_j$ is parent of $x_i$}\;
    		\If{$\alpha < \reallywidehat{\text{\rm p-HSIC}}(x_i-G_1(M_i\setminus \{x_j\}),x_j-G_2(M_j))$}{
	    		{\sf \# When $x_j$ is not a parent, remove it from $M_i$}\;
    			$M_i\leftarrow M_i\setminus \{x_j\}$\;
    		}
    	}
    }
    \Return{$\{M_1, \cdots, M_p \}$}
}
\caption{Determine the directed edges}
   \label{algo:algorithm}
\end{algorithm}

First, we propose a method to determine the directed edges. The detailed procedure is listed in Algorithm~\ref{algo:algorithm}, which consists of two steps. Our method first extracts the candidates of the parents of each observed variable (lines 2--23 in Algorithm~\ref{algo:algorithm}), then it determines the parents of each observed variable (lines 24--30). The notations $G_1$ and $G_2$ in Algorithms~\ref{algo:algorithm} and \ref{algo:algorithm2} indicate GAM regression functions. Those functions perform differently in different lines and different iterations.

The first step of Algorithm~\ref{algo:algorithm} involves each $M_i$ collecting observed variables that are not descendants of $x_i$, which we call the candidates of the parents of $x_i$. The method first initializes each $M_i$ to an empty set (lines 3--4 in Algorithm~\ref{algo:algorithm}). Then it repeats finding a sink for each $K$ that satisfies $K\subseteq X$ and $|K|=t$ (lines 8--18). That is, each $K$ is a set consisting of $t$ observed variables. The value of $t$ starts at $2$ (line 5). It is incremented by 1 when each $M_i$ remains unchanged through an iteration, and it is updated to 2 when at least one $M_i$ changes during the iteration (lines 20--23). When our method determines that $x_b$ is a sink of $K$, it updates $M_b$ by adding each variable in $K\setminus\{x_b\}$ to $M_b$ (lines 16--17). The iteration ends when $t$ exceeds $d$ (line~6), which is a hyperparameter that is set as the maximum number of $|K|$. The purpose of $d$ is to reduce the computation time, and it should be set according to the sparsity of the causal relationships.\\
To find a sink for each $K$, our proposed method first finds the most endogenous variable $x_b$ in $K$ (lines 9--10). Such a $x_b$ maximizes the independence between $(x_b-G_1(M_b\cup K\setminus\{x_b\}))$ and $(\{x_j-G_2(M_j)| x_j\in K\setminus\{x_b\}\})$. We use the {\it p}-value of the Hilbert--Schmidt Independence Criteria (HSIC)~\citep{NIPS2007_3201} for measuring independence, and we also use the GAM regression method proposed by~\citet{wood2004}. Our method examines whether $x_b$ and the other variables in $K\setminus\{x_b\}$ satisfy the condition defined in Lemma~\ref{lem:lem4} using the significance level for independence test, given as hyperparameter $\alpha$ (lines 11--15). If $x_b$ and $K\setminus\{x_b\}$ satisfy the condition, then each variable in $K\setminus\{x_b\}$ is added to $M_b$ (lines 16--17).

In the second step, our proposed method determines the parents of each observed variable. If $x_j\in M_i$ satisfies $x_i-G_1(M_i\setminus \{x_j\})\indepe x_j-G_2(M_j)$, it is not a parent of $x_i$ because of Lemma~\ref{lem:lem2}. Therefore, our method removes each $x_j$ satisfying the above equation from $M_i$ and defines the variables remaining in $M_i$ as the parents of $x_i$ (lines 27--30). The reason why the variables remaining in $M_i$ are the parents of $x_i$ is as follows. Each directed path from each $x_j$ in $M_i$ to $x_i$ is blocked by the parents of $x_i$ that are included in $M_i$ (i.e. $M_i \cap P_i$). If $x_j$ is not a parent of $x_i$, all the directed paths from $x_j$ to $x_i$ is blocked by $(M_i \cap P_i\setminus \{x_j\})$. Then, $(x_i-G_1(M_i\setminus\{x_j\}))$ and $(x_j-G_2(M_j))$ are mutually independent. If $x_j$ is a parent of $x_i$, there is a direct causal effect $x_j \rightarrow x_i$, and it is not blocked by $(M_i\cap P_i\setminus \{x_j\})$. Then, $(x_i-G_1(M_i\setminus \{x_j\}))$ and $(x_j-G_2(M_j))$ cannot be mutually independent. Therefore, the variables remaining in $M_i$ are parents of $x_i$.

After determining the direct causal relationships, the proposed method determines variable pairs having UBPs or UCPs (i.e., variable pairs connected with dashed undirected edges). The detailed procedure is listed in Algorithm~\ref{algo:algorithm2}. If the residual of $x_i$ regressed on $M_i$ and that of $x_j$ regressed on $M_j$ are mutually dependent, there is a UCP or UBP between them (lines 5--6 in Algorithm~\ref{algo:algorithm2}). Therefore, our proposed method connects $x_i$ and $x_j$ with a dashed undirected edge.

\begin{algorithm}[t]
\SetKwProg{init}{initialization}{}{}
\DontPrintSemicolon
\KwIn{i.i.d samples of a $p$-dimensional distribution on $\{x_1,\cdots,x_p\}$ $X$, series of the sets of the parents $\{M_1,\cdots,M_p\}$, significance level for independence test $\alpha$.}
\KwOut{set of variable pairs having a UCP or UBP $C$.}
\SetKwBlock{Begin}{function}{end function}
\Begin($\text{getUndirectedEdges} {(} X, \{M_1,\cdots,M_p\}, \alpha {)}$)
{
	{\bf Initialize} $C \leftarrow \emptyset$.\;
	\ForEach {$i,j\ ((1\leq i,j\leq p)\land (i\neq j))$}{
		\If{$(x_i\notin M_j)\land (x_j \notin M_i)$}{
			
			\If{$\alpha > \reallywidehat{\text{\rm p-HSIC}}(x_i-G_1(M_i),x_j-G_2(M_j))$}{
				$C\leftarrow C\cup\{\{x_i,x_j\}\}$\;
			}
		}
	}
	\Return{$C$}
}
\caption{Determine the undirected dashed edges}
   \label{algo:algorithm2}
\end{algorithm}

The time complexity of the method is $\displaystyle\mathcal{O}\left( p2^pn^2 \right)$ when $d$ (the maximum number of $|K|$) equals the number of all the observed variables $p$ (i.e., $d=p$). Please refer to Supplementary materials for the details.

\section{Experiments}

\begin{figure*}[t]
\begin{center}
  \includegraphics[width=14.9cm]{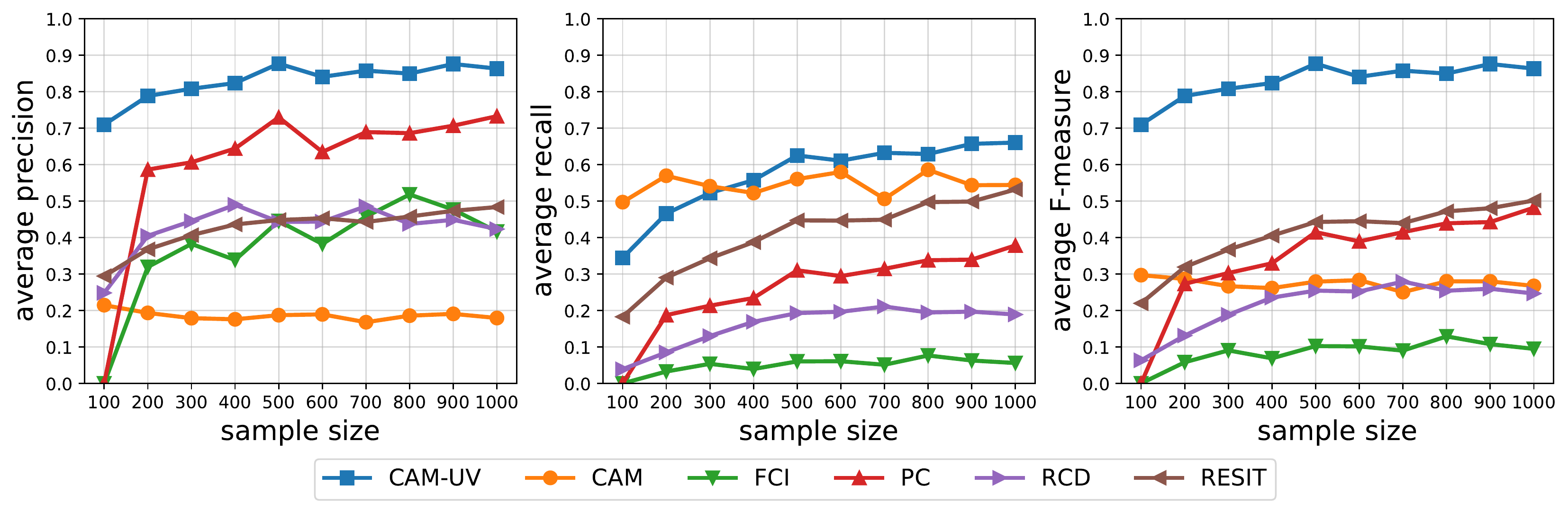}
\end{center}
\caption{Performance on artificial data generated from causal additive models with unobserved variables.}
\label{figure:simresult}
\end{figure*}

\begin{figure}[t]
\centering
\includegraphics[width=6.2cm]{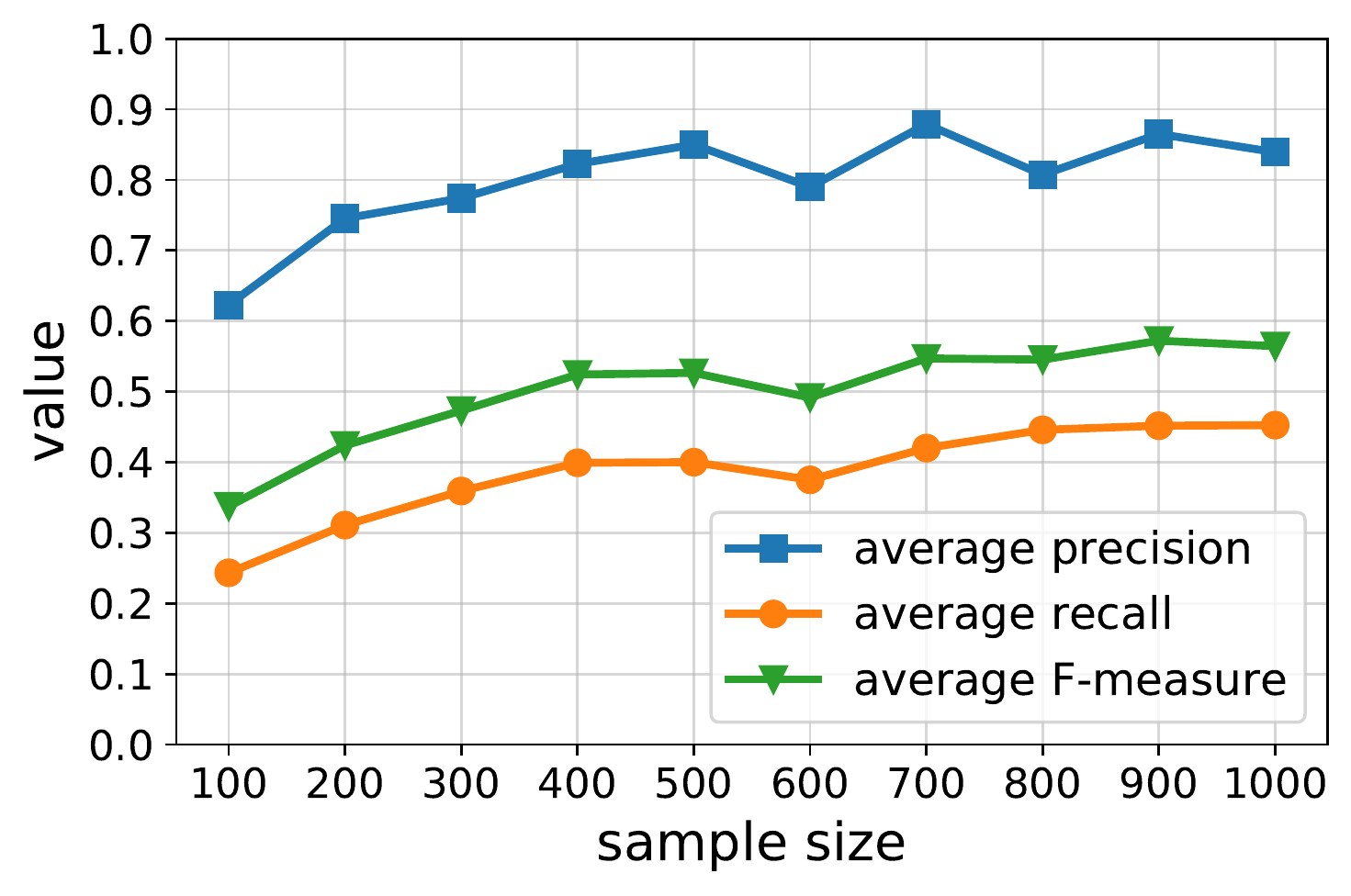}
\caption{Performance of our method CAM-UV in identifying variable pairs having a UCP or a UBP.}
\label{figure:UCPUBP}
\end{figure}

\begin{figure}[t]
\centering
\includegraphics[width=6.2cm]{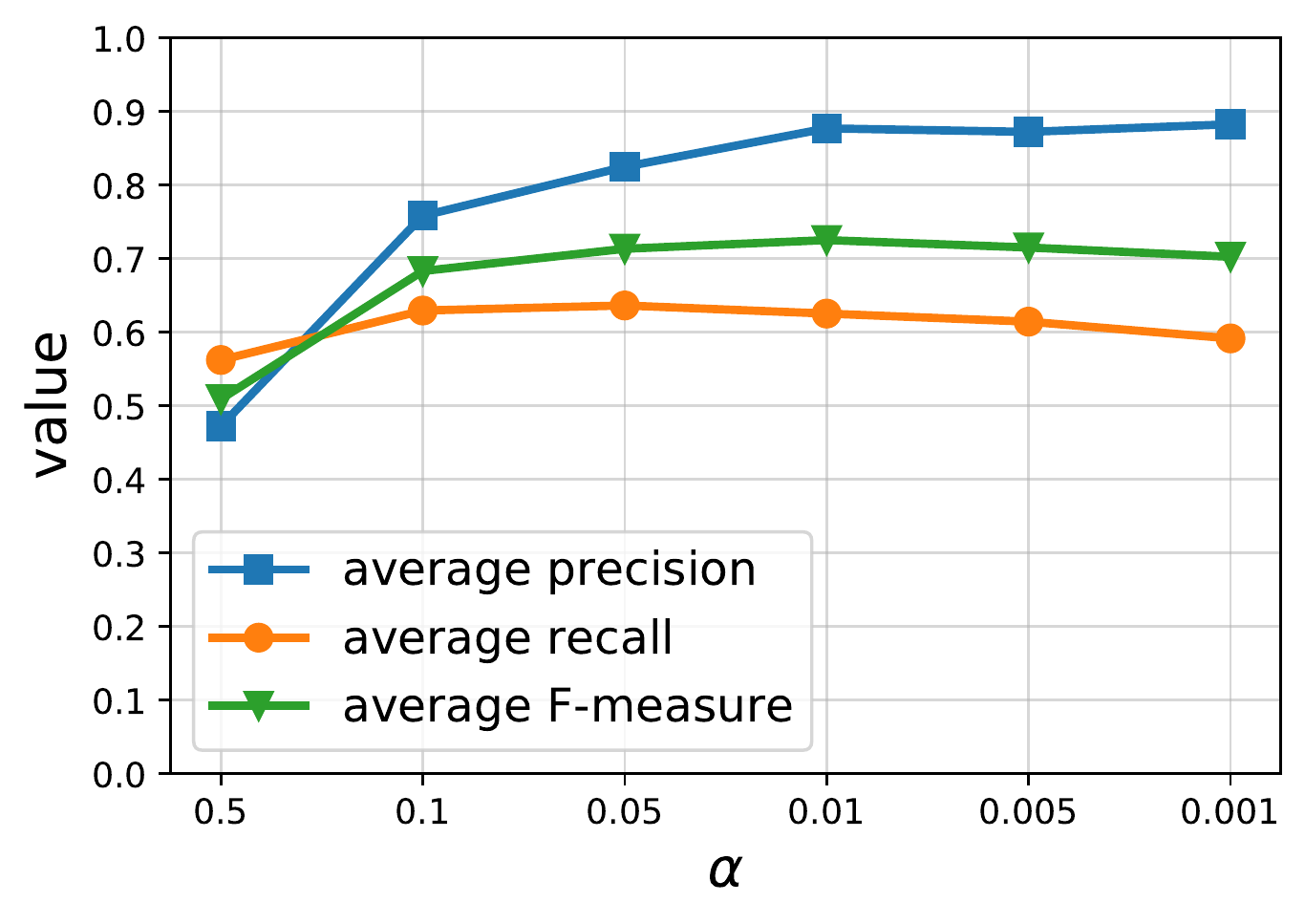}
\caption{Sensitivity of our method CAM-UV to the setting of $\alpha$ in identifying direct causal relationships.}
\label{figure:sensitivity}
\end{figure}

We compared the performance of our method to the following methods: PC~\citep{Spirtes91}, FCI~\citep{fci}, CAM~\citep{buhlmann2014}, RESIT~\citep{peters2014}, and RCD~\citep{maeda20a}. PC is a constraint-based method that assumes the absence of unobserved variables. FCI is also a constraint-based method, but it assumes the presence of unobserved variables. CAM and RESIT are causal functional model-based methods that assume that causal functions are nonlinear and unobserved variables are absent. In contrast, RCD is a causal functional model-based method that assumes that causal functions are linear and unobserved variables are present.

The true causal graphs used for the evaluation are defined such that a directed edge is drawn from $x_j$ to $x_i$ when there is a directed path from $x_j$ to $x_i$ on which no other observed variable exists (see Figures~\ref{figure:intro}-(a,b)). There are types of edges other than directed edges (i.e., $\leftarrow$) in the graphs produced by the above methods and our proposed method, but we only used directed edges for the comparative evaluation.

We used precision, recall, and the F-measure as the evaluation measures. Avoiding false inferences is very important in causal discovery. By evaluating the results in terms of precision, recall, and F-measure, it is possible understand how well each method avoids false inferences. The true positive (TP) is the number of true directed edges that a method correctly infers in terms of their positions and directions. Precision represents the TP divided by the number of estimations, and recall represents the TP divided by the number of all true directed edges. Furthermore, the F-measure is defined as $\text{F-measure} = 2 \cdot \text{precision} \cdot \text{recall} / (\text{precision} + \text{recall})$.

We set the significance levels required for the baseline methods and our proposed method as $0.01$. In addition, we set the maximum number of $|K|$ to 3 (i.e., $d=3$) for our proposed method (see Algorithm~\ref{algo:algorithm}).

We conducted experiments on the artificial data generated from CAM-UV and the simulated fMRI data created in \citet{SMITH2011875}.

\subsection{Performance on artificial data generated from CAM-UV}
\label{sec:artifitial}

\begin{figure*}[t]
\centering
\includegraphics[width=14.4cm]{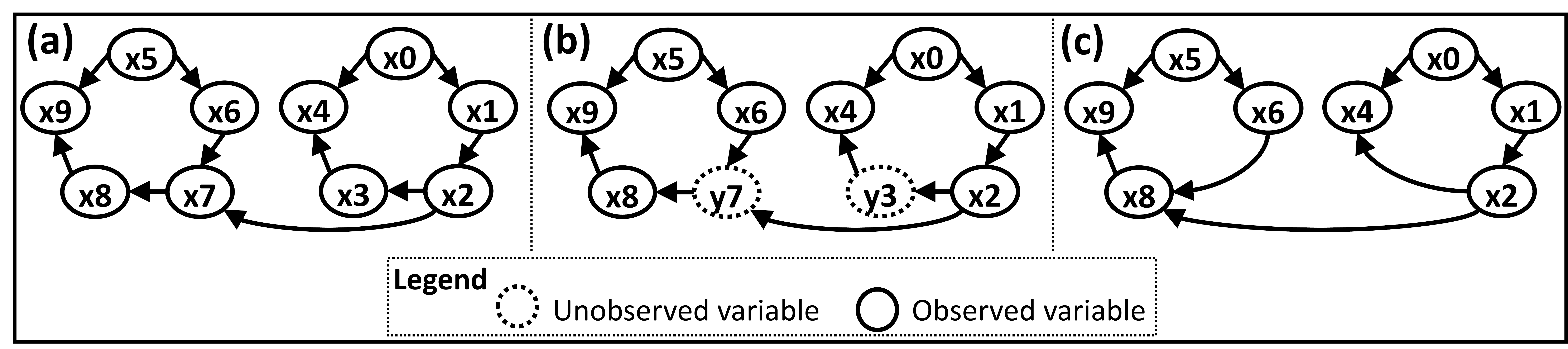}
\caption{(a) Causal structure in fMRI data (b) Omitted variables. (c) True causal graph after omitting variables.}
\label{figure:fmridata}
\end{figure*}

\begin{figure*}[t]
\begin{center}
  \includegraphics[width=15.6cm]{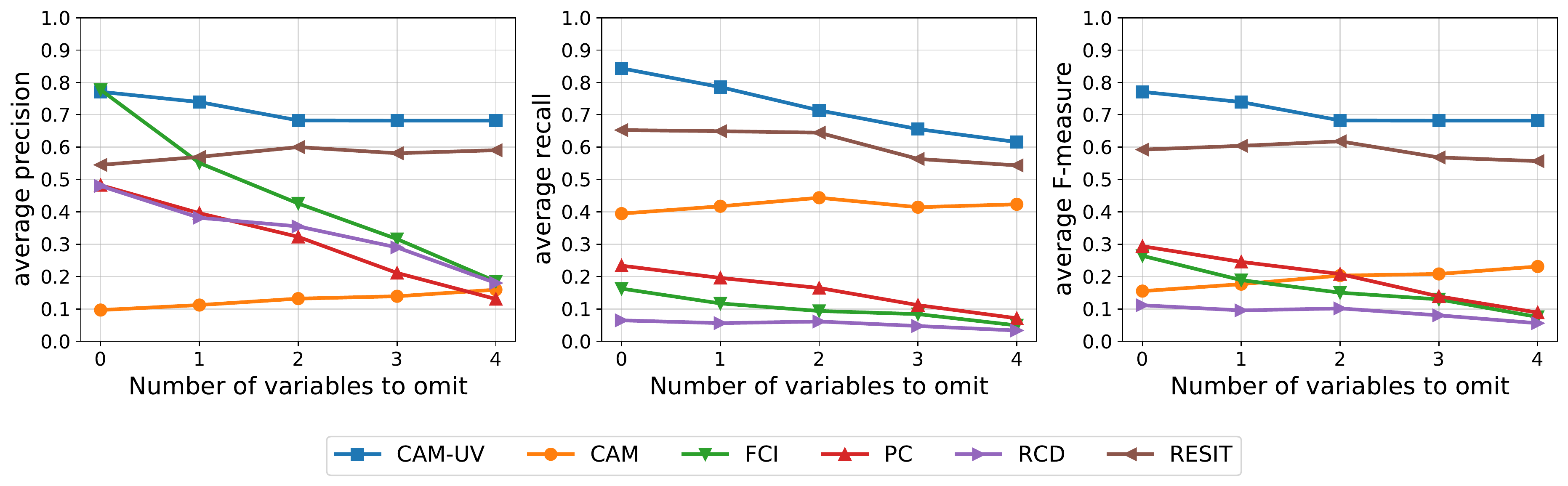}
\end{center}
\caption{Performance on simulated fMRI data.}
\label{figure:fmriresult}
\end{figure*}

{\bf Comparison with baseline methods:} We performed 100 experiments using artificial data with each sample size $n\in\{100, 200, \cdots, 900, 1000\}$ to compare our method to existing methods. The data for each experiment were generated as follows. The data generation process was accomplished using Equation~\ref{eq:obdef2}. We prepared ten observed variables, two unobserved common causes, and two unobserved intermediate causal variables. The causal order of the observed variables was determined the same as the order of the indices of the observed variables. The direct causal relationships between the observed variables were determined based on the Erd\H{o}s--R\'{e}nyi model~\citep{erdHos1960evolution} with parameter 0.3. That is, each variable pair was connected by an edge with a probability of 0.3. The directions of the edges were determined according to the causal order. We drew two directed edges from each unobserved common cause to two randomly selected observed variables. Finally, two variable pairs were randomly chosen, and an unobserved intermediate causal variable was inserted between each variable pair. The indices of the observed variables were randomly permuted after the data were created. The value of each $v_i$ defined in Equation~\ref{eq:obdef2} was determined by 
\begin{align}
	\begin{aligned}
	\label{eq:exp01}
		v_i = \frac{h_i}{{\rm sd}(h_i)} ,\ \ h_i = \sum_{v_j\in (P_i\cup Q_i)}\left((v_j+a_{i,j})^{c_{i,j}}+b_{i,j}\right)+ n_i,
	\end{aligned}
\end{align}

where $a_{i,j}$, $b_{i,j}$, and $c_{i,j}$ denote constants, $n_i$ denotes a random variable, and ${\rm sd}(h_i)$ denotes the standard deviation of $h_i$. The values of $a_{i,j}$ and $b_{i,j}$ were randomly chosen from $U(-5,5)$ and $U(-1,1)$, respectively. The value of $c_{i,j}$ was randomly selected from $\{2,3\}$, where the probability of selecting either value is $0.5$. The samples of $n_i$ were taken from $U(-10+d_i,10+d_i)$ where $d_i$ is a constant randomly chosen from $U(-2,2)$. The causal effect of $v_j$ on $v_i$ (i.e., $f_j^{(i)}(v_j)$ in Equation~\ref{eq:obdef2}) corresponds to $\left((v_j+a_{i,j})^{c_{i,j}}+b_{i,j}\right)/{\rm sd}(h_i)$.

Figure~\ref{figure:simresult} shows the results. The graphs plot the mean values of the evaluation measures. CAM-UV scores the best in terms of precision and the F-measure for each sample size. The recall value of CAM-UV increases as the sample size increases. When the sample size is 300 or less, the scores of our proposed method are the second best next to CAM, but it scores the best when the sample size is more than 300.

{\bf Performance of identifying UCPs and UBPs:} Figure~\ref{figure:UCPUBP} shows how well our proposed method identified UCPs and UBPs. The true positive (TP) is the number of variable pairs having a UCP or UBP and those that are connected by dashed undirected edges in the causal graph inferred by our proposed method. The graphs in Figure~\ref{figure:UCPUBP} show that the precision, recall, and F-measure values increase as the sample size increases.

{\bf Sensitivity to the hyperparameter:} We conducted experiments to investigate the sensitivity of the proposed method to the settings of the hyperparameter $\alpha$. We used 500 samples for each experiment with $\alpha\in\{0.5,0.1,0.05,0.01,0.005,0.001\}$. Figure~\ref{figure:sensitivity} shows the results. The precision and F-measure values gradually increase as $\alpha$ decreases, but they remain flat for $\alpha\leq 0.01$.

{\bf Average runtime:} The average runtime of the proposed method was 8.3 seconds when the sample size was 500 and 24.9 seconds when the sample size was 1000. The details of the average runtimes and the machine used for computing are available in the Supplementary materials.

\subsection{Performance on simulated fMRI data}

We conducted experiments on simulated fMRI data generated by \citet{SMITH2011875} based on a well-known mathematical model of interactions among brain regions, the dynamic causal model~\citep{FRISTON20031273}. We used one of their datasets (``sim2'') with ten variables, the causal relationships of which are shown in Figure~\ref{figure:fmridata}-(a). We randomly omitted $m$ variables for each experiment to create a dataset with unobserved variables. For example, when $m=2$ and $x_3$ and $x_7$ are omitted to make unobserved variables $y_3$ and $y_7$, as shown in Figure~\ref{figure:fmridata}-(b), the causal graph for evaluation includes directed edges $x_2\rightarrow x_4$, $x_2\rightarrow x_8$, and $x_6\rightarrow x_8$, as shown in Figure~\ref{figure:fmridata}-(c). We conducted 100 experiments with 1000 samples randomly extracted from the data for each $m\in\{0,1,2,3,4\}$. Figure~\ref{figure:fmriresult} shows the results. Though the precision score for CAM-UV is slightly lower than FCI when $m=0$, our method scores the best for the other cases.

\section{Conclusions}

In this study, we extended causal additive models to incorporate unobserved variables, the model for which we called {\it causal additive models with unobserved variables} (CAM-UV). Our theoretical analysis showed that the direct causal relationships between observed variables cannot be determined when there is an {\it unobserved causal path} (UCP) or {\it unobserved backdoor path} (UBP) between the variables. However, the theoretical results also show that it is possible to identify such variable pairs and to avoid incorrect inferences. Based on these theoretical results, we proposed a method to infer causal graphs for CAM-UV and verified the method through experiments. As demonstrated by our theoretical and experimental results, our proposed method is effective in inferring causal relationships in the presence of unobserved variables. Our future research will focus on the application of our method for the efficient intervention design using the results of UBPs and UCPs.

%

\bibliography{ref.bib}


\newpage
\onecolumn
\setcounter{section}{0}
\renewcommand{\thesection}{\Alph{section}}
\setcounter{lem}{0}
\renewcommand{\thelem}{\Alph{lem}}
\setcounter{figure}{0}
\renewcommand{\thefigure}{\Alph{figure}}

\begin{center}
{\LARGE \bf Supplementary material for the manuscript: \\``Causal Additive Models with Unobserved Variables''}
\end{center}

\section{Proofs}

This appendix provides the proofs of Lemmas~1--4. First, we provide Lemma~\ref{lem:hosoku}, which is used in the proofs of Lemmas~1--3.

\begin{lem}\label{lem:hosoku}
Assume the data generation process of the variables is CAM-UV as defined in Section~\ref{sec:problem}. Let $M$ denote a set of variables not containing $v_i$ (i.e., $M\subseteq V\setminus\{v_i\}$), and let $s(v_i)$ denote a linear or nonlinear function of $v_i$. Then, the residual of $s(v_i)$ regressed on $M$ cannot be independent of $n_i$ as formulated in 
\begin{equation}\label{eq:hosoku}
	\forall M\subseteq V\setminus\{v_i\}, G, [s(v_i)-G(M)\notindepe n_i]
\end{equation}
where $G$ denotes a function satisfying Assumption~\ref{assumption:ass2}.

\end{lem}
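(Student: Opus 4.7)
The plan is to show that the residual $s(v_i) - G(M)$ must carry a genuine structural dependence on $n_i$, and then convert that structural dependence into probabilistic non-independence using the faithfulness-type conditions in Assumptions~\ref{assumption:as1} and~\ref{assumption:ass2}. First I would decompose $v_i = u_i + n_i$ with $u_i = z_i + w_i$; since $u_i$ is a function of external effects attached only to strict ancestors of $v_i$, it is independent of $n_i$. Because $n_i$ propagates only to $v_i$ and its descendants in the DAG, I would split $M = M_{\mathrm{nd}} \cup M_{\mathrm{d}}$ into non-descendants and descendants of $v_i$; the contribution $G(M_{\mathrm{nd}})$ is a function of variables independent of $n_i$, so any potential cancellation must come from the descendant part $G(M_{\mathrm{d}}) = \sum_{v_j \in M_{\mathrm{d}}} g_{i,j}(v_j)$.

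Next I would assume for contradiction that $s(v_i) - G(M_{\mathrm{d}})$ does not depend on $n_i$ at all. Writing each descendant as $v_j = \Phi_j(v_i, \tilde r_j)$, where $\tilde r_j$ aggregates the external effects entering $v_j$ through paths that avoid $v_i$ (and is therefore independent of $n_i$ and of $u_i$), the GAM form forces every summand $g_{i,j}(\Phi_j(u_i+n_i,\tilde r_j))$ to couple $n_i$ with $\tilde r_j$, whereas $s(u_i+n_i)$ couples $n_i$ with $u_i$. Taking a derivative or finite difference in $n_i$ and exploiting the free variability and mutual independence of $u_i$ and the $\tilde r_j$, the supposed cancellation forces $s$, each $g_{i,j}$, and every causal function $f_\cdot^{(\cdot)}$ on a path from $v_i$ to $M_{\mathrm{d}}$ to be affine, contradicting the nonlinearity of the $f_\cdot^{(\cdot)}$ built into the CAM-UV definition in Equation~\ref{eq:obdef2}. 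Consequently $s(v_i) - G(M)$ retains a term involving $n_i$, and Assumption~\ref{assumption:as1} (viewing $n_i$ trivially as a quantity involving $n_i$) then yields $s(v_i) - G(M) \notindepe n_i$.

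The main obstacle will be executing the non-cancellation step when $M_{\mathrm{d}}$ contains several descendants that share intermediate nodes, share external effects in their $\tilde r_j$, or are connected to $v_i$ along multiple directed paths. I would handle this by a topological induction on $M_{\mathrm{d}}$: peel off a descendant $v_j$ that is minimal in the induced subgraph, so its own external effect $n_j$ appears in no other $v_k \in M_{\mathrm{d}}$; then use the univariate structure of $g_{i,j}$ together with the independent variation of $n_j$ to eliminate $g_{i,j}$ from the cancellation equation and reduce to a smaller instance. The base case, a single directed edge $v_i \to v_j$, collapses to the elementary derivative argument sketched above, where nonlinearity of $f_i^{(j)}$ blocks any affine cancellation.
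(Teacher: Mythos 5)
Your proposal rests on the same core idea as the paper's proof: any cancellation of $n_i$ from the residual must come from descendants of $v_i$ (non-descendants are independent of $n_i$), and for a descendant $v_j$ the own external effect $n_j$ enters additively downstream of $v_i$ and gets entangled with $n_i$ inside the nonlinear GAM component $g_{i,j}$, while $s(v_i)$ contains no $n_j$, so the $n_i$-terms cannot be removed; a faithfulness-type step then converts ``retains a term in $n_i$'' into dependence. The execution differs: the paper picks a single canceling descendant, writes $v_j=u(n_i,T)+n_j$, and argues that $g_j(u(n_i,T)+n_j)$ contains a jointly non-additive $(n_i,n_j)$ term that nothing available can cancel, whereas you run a derivative/finite-difference argument combined with a topological induction that peels off the most downstream member of $M_{\mathrm{d}}$ so that its noise appears in no other regressor. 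That induction actually treats the case of several descendants sharing noises more explicitly than the paper's one-paragraph argument, which is a genuine plus of your route.

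Three points need repair, though only the second affects the logic. First, $\tilde r_j$ is generally \emph{not} independent of $u_i$: an ancestor of $v_i$ can reach $v_j$ along a path avoiding $v_i$, so $\tilde r_j$ and $u_i$ may share external effects. Your argument does not actually need this; it only needs that $n_i$ and $n_j$ are independent of everything upstream and of each other, which holds, so restate the independence claim accordingly. Second, the contradiction lands in the wrong place. In the base case $v_i\to v_j$ with $v_j=f(v_i)+n_j$, the cancellation condition gives $g''\bigl(f(v_i)+n_j\bigr)f'(v_i)\equiv 0$, which forces $g$ to be affine on the relevant range (or $v_j$ not to depend on $n_i$, in which case the term is dropped and the recursion continues); it does \emph{not} force $f$ or $s$ to be affine (e.g., $s'=c\,f'$ with constant $c$ is perfectly consistent with nonlinear $f$). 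So the punchline ``contradicting the nonlinearity of the $f$'s'' does not follow; the contradiction must instead be drawn from the nonlinearity of the regression components $g_{i,m}$ required by Assumption~\ref{assumption:ass2} (exactly as the paper does with ``because $g_j$ is a nonlinear function''), together with nonconstancy of $s$ once all descendant terms are eliminated. This is an easy fix since your derivation already yields affineness of the $g_{i,j}$'s. Third, the final step from a surviving $n_i$-term to $s(v_i)-G(M)\notindepe n_i$ should appeal to the residual-level faithfulness condition of Assumption~\ref{assumption:ass2} rather than Assumption~\ref{assumption:as1} (which concerns pairs of variables), and ``minimal'' in your induction should be read as the most downstream element (a sink of $M_{\mathrm{d}}$), so that $n_j$ indeed occurs in no other member of $M_{\mathrm{d}}$.
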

\begin{proof}
We prove the lemma by contradiction. Assume that $s(v_i)-G(M)\indepe n_i$ holds. Function $G(M)$ is formulated as $G(M)=\sum_{x_m\in M}g_{m}(x_m)$ where each $g_{m}(x_m)$ is a nonlinear function of $x_m$, according to Assumption~\ref{assumption:ass2}. Then there is a variable $v_j\in M$ such that $n_i$ in $s(v_i)$ is canceled out by $g_j(v_j)$, so $v_j$ satisfies $v_j\notindepe n_i$. The variable $v_j$ is a descendant of $v_i$ because $v_j\indepe n_i$ holds when $v_j$ is not a descendant of $v_i$. Furthermore, $v_j$ can be formulated as $v_j=u(n_i, T)+n_j$, where $u$ is a function, and $T$ is a set of all the external effects influencing $v_j$, except for $n_i$ and $n_j$. Because $g_j$ is a nonlinear function, $g_j(v_j)=g_j(u(n_i, T)+n_j)$ has a term involving a function of $n_i$ and $n_j$, which can be formulated as $a(n_i,n_j)$, but it cannot be represented as the linear sum of functions of $n_i$ and $n_j$, such as $a(n_i,n_j)=b(n_i)+c(n_j)$. Because $v_i$ is an ancestor of $v_j$, $v_i\indepe n_j$ holds. Then, because $v_i$ does not have a term involving a function of $n_i$ and $n_j$, terms containing $n_i$ cannot fully be removed from $s(v_i)-g_j(v_j)$. Thus, Equation~\ref{eq:hosoku} holds.

\end{proof}

\subsection{Proof of Lemma~1}
\begin{proof}

As defined in Section~\ref{sec:problem}, $x_i$ is formulated as $x_i=z_i+w_i+n_i$, where $z_i$ is the sum of the direct effects of observed variables on $x_i$, $w_i$ is the sum of the direct effects of unobserved variables on $x_i$, and $n_i$ is the external effect on $x_i$. In addition, $z_i$ and $z_j$ are the mixtures of the observed direct causes of $x_i$ and $x_j$, with the causal functions that take the form of generalized additive models. We represent $G_1(M)$ and $G_2(N)$ as $G_1(M)=z_i+G_1^{*}(M)$ and $G_2(N)=G_2^{*}(N)+z_j$. Then Equation~\ref{eq:lem1} is equivalent to 
\begin{equation}\label{eq:pr1-1}
	\forall G^{*}_1, G^{*}_2, M \subseteq (X \setminus \{x_j\}), N \subseteq (X \setminus \{x_i\}), \left[\left(w_i + n_i - G^{*}_1(M)\right) \notindepe \left(w_j + n_j-G^{*}_2(N)\right) \right]. 
\end{equation}

We define $A_i$ and $A_j$ as the sets of all the external effects depending on $(w_i+n_i)$ and $(w_j+n_j)$, respectively (i.e., $A_i=\{n_k|n_k\notindepe (w_i+n_i)\}$ and $A_j=\{n_k|n_k\notindepe (w_j+n_j)\}$). Then, $A_i$ is the set of the external effects of all the unobserved direct causes of $x_i$ and the external effect of $x_i$. Because $M$ is a set of observed variables excluding $x_i$, $M$ does not contain $x_i$ or each unobserved direct cause of $x_i$. Then, because of Lemma~\ref{lem:hosoku}, $G_1(M)$ cannot cancel out each $n_k\in A_i$ from $(w_i+n_i)$. In the same way, $G_2(N)$ cannot cancel out each $n_k\in A_j$ from $(w_j+n_j)$. Therefore, Equation~\ref{eq:pr1-1} is equivalent to 

\begin{equation}\label{eq:pr1-2}
	(w_i+n_i)\notindepe(w_j+n_j).
\end{equation}

Because CAM-UV assumes that all the external effects are mutually independent, $n_i\indepe n_j$ holds. Then we obtain
	\begin{align}
		\begin{aligned}
		\label{eq:proof1-3}
		(w_i \notindepe n_j) \lor (n_i \notindepe w_j)\lor (w_i \notindepe w_j).
		\end{aligned}
	\end{align}

If $(w_i \notindepe n_j)$ holds, there exists unobserved variable $y_k\in Q_i$ such that $y_k$ has a term involving a function of $n_j$. Then, there is a directed path from $x_j$ to $x_i$ that ends with the directed edge connecting $y_k$ and $x_i$ (i.e., $x_j\rightarrow \cdots \rightarrow y_k \rightarrow x_i$). Therefore, there exists a UCP from $x_j$ to $x_i$. In addition, the existence of a UCP from $x_j$ to $x_i$ also implies that $(w_i \notindepe n_j)$ because of Assumptions~\ref{assumption:as1} and \ref{assumption:ass2}. Therefore, if and only if $(w_i \notindepe n_j)$ holds, there is a UCP from $x_j$ to $x_i$. Similarly, if and only if $(w_j \notindepe n_i)$ holds, there is a UCP from $x_i$ to $x_j$. When $(w_i \notindepe w_j)$ holds, there exists $n_k$ satisfying $n_k\notindepe w_i$ and $n_k\notindepe w_j$. When $n_k\notindepe w_i$ holds, $v_k$ is an ancestor of an unobserved direct cause of $x_i$ or $v_k$ is an unobserved direct cause of $x_i$. Similarly, when $n_k\notindepe w_j$ holds, $v_k$ is an ancestor of an unobserved direct cause of $x_j$ or $v_k$ is an unobserved direct cause of $x_j$. Therefore, there is a UBP between $x_i$ and $x_j$ when $(w_i \notindepe w_j)$ holds. In addition, the existence of a UBP between $x_i$ and $x_j$ also implies that $(w_i \notindepe w_j)$ because of Assumptions~\ref{assumption:as1} and \ref{assumption:ass2}. Therefore, if and only if Equation~\ref{eq:lem1} is satisfied, there is a UCP or UBP between $x_i$ and $x_j$.
	\end{proof}

\subsection{Proof of Lemma~2}
\begin{proof}
When Equation~\ref{eq:lem2} holds, Equation~\ref{eq:lem1} does not hold. Therefore, there is no UCP or UBP between $x_i$ and $x_j$ according to Lemma~\ref{lem:lem1}. We prove that there is no direct causal relationship between $x_i$ and $x_j$ by contradiction. Assume that $x_j$ is a direct cause of $x_i$ and that $G_1$, $G_2$, $M \subseteq (X \setminus \{x_i,x_j\})$, and $N \subseteq (X \setminus \{x_i,x_j\})$ satisfy 
\begin{equation}\label{eq:proof2-1}
(x_i - G_1(M)) \indepe (x_j-G_2(N)).
\end{equation}
Then $x_i$ has terms involving functions of $x_j$ when $x_j$ is a direct cause of $x_i$:

\begin{align}
	\begin{aligned}\label{eq:proof2-2}
		x_i=f_j^{(i)}(x_j)+ \sum_{x_m \in (P_i\setminus\{x_j\})} f_m^{(i)}(x_m)+w_i+n_i.
	\end{aligned}
\end{align}

Both $M$ and $N$ do not contain $x_i$ or $x_j$. Because of Lemma~\ref{lem:hosoku}, $(x_i - G_1(M)) \notindepe n_j$ and $(x_j - G_2(N)) \notindepe n_j$ hold. Then, $(x_i - G_1(M)) \notindepe (x_j-G_2(N))$ holds because of Assumption~\ref{assumption:ass2}. However, this contradicts Equation~\ref{eq:proof2-1}. Therefore, there is no direct causal relationship between $x_i$ and $x_j$.
\end{proof}

\subsection{Proof of Lemma~3}
\begin{proof}
When Equation~\ref{eq:lem3-2} holds, Equation~\ref{eq:lem1} does not hold. Therefore, there is no UCP or UBP between $x_i$ and $x_j$ according to Lemma~\ref{lem:lem1}. When Equation~\ref{eq:lem3-1} holds, Equation~\ref{eq:lem2} does not hold. Therefore, there is a direct causal relationship between $x_i$ and $x_j$ according to Lemma~\ref{lem:lem2}. In the following, we consider whether the direction of the causal effect between $x_i$ and $x_j$ is identifiable.

Assume that $x_j$ is a direct cause of $x_i$. As defined in Section~\ref{sec:problem}, $x_i$ is formulated as $x_i=z_i+w_i+n_i$, where $z_i$ is the sum of the direct effects of observed variables on $x_i$, $w_i$ is the sum of the direct effects of unobserved variables on $x_i$, and $n_i$ is the external effect on $x_i$. Then, $z_i$ includes the effect of $x_j$, and $z_j$ does not include the effect of $x_i$. Because $M$ can contain $x_j$, we can set $G_1(M)$ as $G_1(M)=z_i$ and set $G_2(N)$ as $G_2(N)=z_j$. Then we obtain
\begin{equation}\label{eq:proof3-1}
x_i-G_1(M)=w_i+n_i,	
\end{equation}
\begin{equation}\label{eq:proof3-2}
x_j-G_2(N)=w_j+n_j.
\end{equation}
When there is no UCP from $x_j$ to $x_i$, $w_i$ has no term involving a function of $n_j$. Therefore, $w_i\indepe n_j$ holds. In the same way, $n_i\indepe w_j$ holds when there is no UCP from $x_i$ to $x_j$. When there is no UBP between $x_i$ and $x_j$, there is no external effect $n_k$ such that both $w_i$ and $w_j$ have terms involving functions of $n_k$. Therefore, $w_i\indepe w_j$ holds when there is no UBP between $x_i$ and $x_j$. Then, because there is no UCP or UBP between $x_i$ and $x_j$, $(w_i+n_i)\indepe (w_j+n_j)$ holds. From Equations~\ref{eq:proof3-1} and \ref{eq:proof3-2}, $(x_i-G_1(M))\indepe (x_j-G_2(N))$ holds. Hence, Equation~\ref{eq:lem3-2} is satisfied, and there is no contradictory when $x_j$ is a direct cause of $x_i$.

Next, we assume that $x_i$ is a direct cause of $x_j$. From this, we obtain

\begin{align}
	\begin{aligned}\label{eq:proof3-3}
		x_j=f_i^{(j)}(x_i)+ \sum_{x_m \in (P_j\setminus\{x_i\})} f_m^{(j)}(x_m)+w_j+n_j.
	\end{aligned}
\end{align}

As shown in Equation~\ref{eq:proof3-3}, $x_j$ has terms involving functions of $x_i$ when $x_i$ is a direct cause of $x_j$. In addition, $M$ cannot contain $x_i$, and $N$ cannot contain $x_i$ or $x_j$. Then, because of Lemma~\ref{lem:hosoku}, we obtain the following equations:
\begin{equation}
	\forall G_1, M\subseteq (X\setminus\{x_i\}), [(x_i - G_1(M)) \notindepe n_i].
\end{equation}

\begin{equation}
	\forall G_2, N\subseteq (X\setminus\{x_i,x_j\}), [(x_j - G_2(N)) \notindepe n_i].
\end{equation}
Then, because of Assumption~\ref{assumption:ass2}, we obtain 

\begin{equation}\label{eq:proof3-4}
	\forall G_1, G_2, M\subseteq (X\setminus\{x_i\}), N\subseteq (X\setminus\{x_i,x_j\}), [(x_i - G_1(M)) \notindepe (x_j - G_2(N))].
\end{equation}
This contradicts Equation~\ref{eq:lem3-2}.

Therefore, Equation~\ref{eq:lem3-2} is satisfied when $x_j$ is a direct cause of $x_i$, and it is not satisfied when $x_i$ is a direct cause of $x_j$. Hence, $x_j$ is a direct cause of $x_i$.

\end{proof}

\subsection{Proof of Lemma~4}
\begin{proof}
We prove the lemma by contradiction. Assume that there exists $x_j\in(K\setminus \{x_i\})$ that is a descendant of $x_i$, and that $(x_i-G_i^1(M_i\cup K\setminus \{x_i\}))$ and $(x_j-G_j^1(M_j))$ are mutually independent, and that $(x_i-G_i^2(M_i))$ and $(x_j-G_j^2(M_j))$ cannot be mutually independent. If $(x_i-G_i^2(M_i))$ and $(x_j-G_j^2(M_j))$ cannot be mutually independent, there exists a causal path from $x_i$ to $x_j$ that is not blocked by $M_j$. If such a causal path exists, $(x_i-G_i^1(M_i\cup K\setminus \{x_i\}))$ and $(x_j-G_j^1(M_j))$ cannot also be mutually independent. This contradicts the assumption defined above. Therefore, each $x_j\in K\setminus\{x_i\}$ is not a descendant of $x_i$.

\end{proof}

\section{Time complexity}
First, we obtain the time complexity required by Phase 1 of Algorithm~\ref{algo:algorithm}. The number of times lines 9--18 are repeated in Algorithm~\ref{algo:algorithm} is $_{p}C_t$ for each $t$. In addition, $t$ independence tests and GAM regressions are required in lines~10 and 13. Therefore, the number of independence tests and GAM regressions required in lines~10 and 13 is obtained by
\begin{align}
\begin{aligned}
\label{eq:time-1}
	\sum_{t=2}^{k=p} {}_{p}C_t \cdot t&=\sum_{t=2}^{t=p}\frac{p!}{t!(p-t)!}\cdot t\\
	&=p\cdot\sum_{t=2}^{t=p}\frac{(p-1)!}{(p-t)!(t-1)!}\\
	&=p(2^{p-1}-1).
\end{aligned}
\end{align}

The time complexity for calculating the HSIC is $\mathcal{O}(n^2)$~\citep{NIPS2007_3201}. Let $a$ denote the maximum number of splines for GAM regression. Then the time complexity for GAM regression is $\mathcal{O}(na^2)$~\citet{wood2004}. We assume $n$ is much larger than $a^2$. Then the time complexity of Phase 1 is obtained by 
\begin{equation}
\label{eq:time-2}
	\mathcal{O}\left(p(2^{p-1}-1)(n^2+na^2)\right)=\mathcal{O}\left( p2^pn^2 \right).
\end{equation}

Both Phase 2 in Algorithm~\ref{algo:algorithm} and Algorithm~\ref{algo:algorithm2} require $_{p}C_2$ calculations of the HSIC. Therefore, the time complexity is 
\begin{equation}
\label{eq:time-3}
	\mathcal{O}\left({}_pC_2n^2\right)=\mathcal{O}\left(p^2n^2\right),
\end{equation}
which is less than Equation~\ref{eq:time-2}. Therefore, the time complexity of our proposed method is $\displaystyle\mathcal{O}\left( p2^pn^2 \right)$.

\section{Average runtime of the proposed method}
The average runtime of the proposed method using artificial data (Section~\ref{sec:artifitial}) is shown in Figure~\ref{figure:time}. The details of the computing machine are as follows:
\begin{itemize}
	\item macOS Catalina 10.15.7
	\item 2.4 GHz 8‑core 9th‑generation Intel Core i9 processor
	\item 64 GB 2666 MHz DDR4 memory
	\item Python 3.8.6
\end{itemize}

\begin{figure}[h]
\centering
\includegraphics[width=7.0cm]{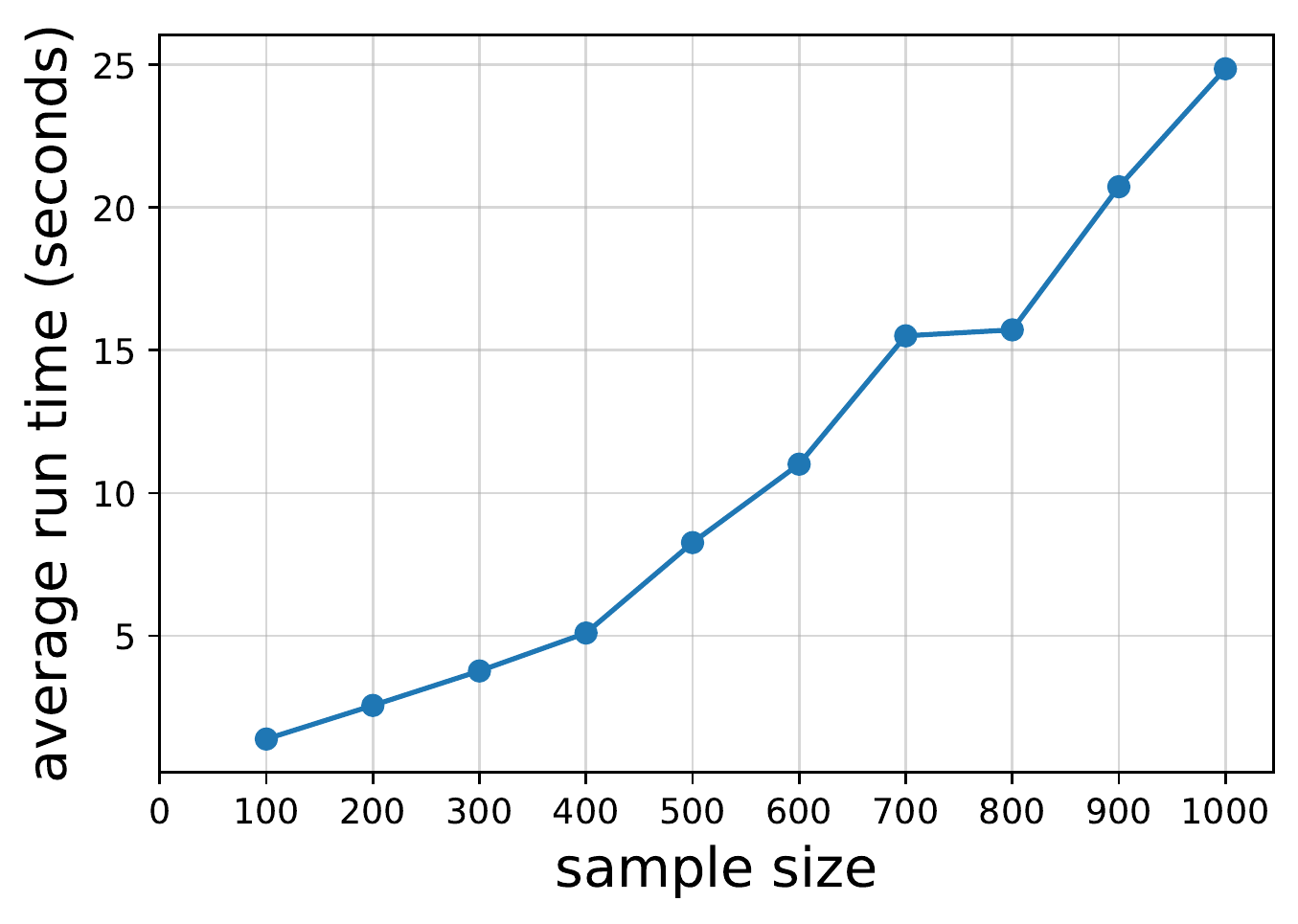}
\caption{Average runtime of the proposed method.}
\label{figure:time}
\end{figure}

\end{document}